\documentclass{article} 
\usepackage{iclr2027_conference,times}

\usepackage{hyperref}
\usepackage{url}

\usepackage{mathtools}
\usepackage{amsmath, amssymb, amsthm}
\usepackage{algorithm}    
\usepackage{algpseudocode} 
\usepackage{booktabs}
\usepackage{multirow}

\theoremstyle{plain}
\newtheorem{theorem}{Theorem}[section]
\newtheorem{lemma}[theorem]{Lemma}

\theoremstyle{definition}
\newtheorem{definition}[theorem]{Definition}

\theoremstyle{remark}

\newcommand{\argmax}{\operatorname{arg}\max}
\newcommand{\argmin}{\operatorname{arg}\min}
\DeclareMathOperator{\KL}{KL}
\newcommand{\piref}{\pi^{\mathrm{ref}}}
\newcommand{\omegaref}{\omega^{\mathrm{ref}}}

\newcommand{\mcP}{\mathcal{P}}
\newcommand{\PP}{\mathbb{P}}
\newcommand{\EE}{\mathbb{E}}

\title{Can You Trust Your Preference Models? \\
Stackelberg Games for Alignment}

\author{Riccardo Formenti\\
Department of Mathematics\\
ETH Zürich\\
\texttt{rformenti@ethz.ch} \\
\And
Aditya Agrawal \\
Department of Computer Science \\
Aalto University \\
\texttt{aditya.a.agrawal@aalto.fi} \\
\And
Vikas Garg \\
YaiYai Ltd and Aalto University \\
\texttt{vgarg@csail.mit.edu}
}

\usepackage{xcolor}
\definecolor{darkred}{rgb}{0.8, 0, 0}

\iclrfinalcopy 
\begin{document}

\maketitle

\begin{abstract}
Language models are frequently aligned to human preferences via imperfect evaluators whose errors can be exploited during optimization. We train policies to search for comparisons on which an evaluator may reverse the underlying preference, providing candidates for additional supervision. Since human preferences are unavailable during policy training, we use disagreement among an ensemble of preference models as a proxy for prediction error. We formulate this search as a zero-sum Stackelberg game between two opposing policies, a Leader which tries to minimize disagreement, and a Follower which maximizes it. For a general class of finite contextual Stackelberg games, we establish equilibrium existence and, under regularization, uniqueness and a closed-form characterization. We introduce Stackelberg Mirror Descent–Ascent (\textbf{\textsc{smida}}), a two-timescale algorithm with linear last-iterate convergence in the tabular setting, and develop a neural policy-gradient analogue. Experiments with a synthetic oracle support disagreement as an informative proxy for prediction error. On held out data the trained policies outperform reference policies and increase the fraction of comparisons that exceed a disagreement threshold and reverse the oracle’s preference. 
\end{abstract}

\section{Introduction}

Aligning artificial intelligence with human preferences requires ensuring that autonomous artificial systems reliably pursue outcomes humans value, even when those preferences are imperfectly specified and when the consequences of the systems' actions are difficult to anticipate, monitor, and verify \citep{wiener1960some, bostrom2014superintelligence, russell2019human}. In July 2026, OpenAI was conducting internal evaluations of its frontier models; during this work, agents intended to operate in isolated environments exploited weaknesses in the evaluation infrastructure to access the public internet and to establish an unauthorized communication channel among themselves \citep{openai2026huggingface}. Between July 11 and July 13, their pursuit of the evaluation's score escalated into a coordinated attack on a third party, Hugging Face, compromising parts of its production infrastructure \citep{openai2026huggingface,metr2026huggingface}. The agents had optimized for the metric rather than for the capabilities the score was intended to measure \citep{openai2026huggingface}. This behavior illustrates clearly ``Goodhart's law'' \citep{manheim2018categorizing} -- \textit{a measure ceases to be a good measure once it becomes a target} -- a phenomenon also discussed in the alignment literature as reward hacking \citep{amodei2016concrete,DBLP:conf/nips/SkalseHKK22}. A system optimizing against an evaluator that is only an imperfect proxy for the designer's preferences can find behaviors that score highly under the proxy while failing, sometimes catastrophically, to achieve the intended objective.

Many approaches to alignment, from reinforcement learning at training time \citep{DBLP:conf/nips/ChristianoLBMLA17,DBLP:conf/nips/Ouyang0JAWMZASR22} to oversight at inference time \citep{DBLP:conf/icml/GreenblattSSR24}, rely on a trained evaluator to guide learning or assess and constrain the system's behavior during deployment. This proxy evaluator is often a parametric preference (reward) model estimated from limited human data, and as such, may poorly reflect human preferences beyond the observed comparisons. These discrepancies will be systematically exploited by an optimizing policy \citep{DBLP:conf/icml/GaoSH23}, especially when the evaluator is fitted offline and held fixed throughout. Identifying such errors and directing limited human supervision toward correcting them is therefore a key challenge, and one that may become increasingly important as language models grow more capable.

We consider here the dual problem of aligning a language model, that of making the preference model that guides its alignment more robust. We train language model policies against a preference evaluator to actively search for comparisons that reveal its errors and provide candidates for additional human supervision. The same optimizing behavior that exploit and exposes an evaluator’s misspecification can thus help identify where its training data need to be augmented. Our target is a preference triplet (a prompt and two responses) on which the underlying human preference clearly favors one response while the evaluator favors the other. The first condition distinguishes an error from a near-tie on which a reversal is inconsequential while the second identifies an error in the evaluator’s ranking of the responses. We call such triplets adversarial. Unfortunately, neither condition of an adversarial triplet can be checked during policy training since the underlying human preference is unavailable on newly generated comparisons. We therefore use the disagreement of an ensemble of preference models \citep{DBLP:conf/colt/SeungOS92,DBLP:conf/nips/Lakshminarayanan17} as signal that can serve as an objective for policy optimization. We formulate this search as a contextual zero-sum Stackelberg game \citep{vonstackelberg1934marktform, simaan1973stackelberg} between two language models. Given a prompt, the Leader generates a response; the Follower observes both the prompt and that response and generates an alternative that maximizes ensemble disagreement. Anticipating the Follower's best response, the Leader's objective is to minimize disagreement. We call this game the ensemble-variance game.

\textbf{Our contributions.} (\textbf{i}) We introduce a framework for identifying errors in preference models and define adversarial triplets as comparisons on which the evaluator reverses a clear underlying preference (Section~\ref{sec:sec3-generalintro}). (\textbf{ii}) We introduce a general formulation of contextual zero-sum Stackelberg games between two policies, of which the ensemble-variance game is a special case, and prove the existence of Stackelberg equilibria and, under regularization toward reference policies, their uniqueness and closed form (Theorem~\ref{th:existence-uniqueness-main-text}). (\textbf{iii}) We propose \textbf{S}tackelberg \textbf{Mi}rror \textbf{D}escent--\textbf{A}scent (\textbf{\textsc{smida}}) (Algorithm~\ref{alg:smida}), a two-timescale algorithm whose last iterate converges linearly to the unique equilibrium (Theorem~\ref{thm:smida-main}). We introduce \textbf{\textsc{deep-smida}} (Algorithm~\ref{alg:deep-stackelberg}), a neural policy-gradient counterpart, and apply it to the ensemble-variance game. (\textbf{iv}) Small scale experiments on held out prompts support disagreement as an informative proxy for prediction error (Section~\ref{subsec:variance-experiments}) and show that the trained policies generate comparisons with greater ensemble disagreement and increase the yield of adversarial triplets (Section~\ref{sec:5.3}). Further experiments are reported in Section~\ref{sec:main-text-further}.

\section{Related Work}

\textbf{Disagreement as a Training Signal.} Ensembles of reward models are used to penalize uncertain reward during policy optimization \citep{DBLP:conf/iclr/CosteAK024,eisenstein2024helping} and to select comparisons for annotation, either from fixed response pools \citep{DBLP:conf/icml/MuldrewHZB24,melikidze2026activeultrafeedback} or by rewarding a policy for ensemble disagreement \citep{DBLP:conf/iclr/LiangSLA22}. Closest in purpose, Adv-RM \citep{bukharin2025adversarial} trains a single generator to maximize the disagreement between two reward models while keeping the target model's reward high. We instead make disagreement the objective of both players. The Follower constructs the alternative against which a response is compared, while the Leader is driven away from responses on which the ensemble is easily exploited. 

\textbf{Games in Preference Learning.} Nash Learning from Human Feedback (NLHF) \citep{DBLP:conf/icml/MunosVCARGTGMFM24} and its descendants \citep{DBLP:conf/icml/SwamyDK0A24,DBLP:conf/iclr/WuSYJYG25} compute Nash equilibria of simultaneous play under a fixed preference model.  Stackelberg Learning from Human Feedback (SLHF) \citep{pasztor2026stackelberg} shares our sequential structure, with a Follower conditioning on the Leader's response, and optimizing preferences. For an extended review of relevant literature, see Appendix~\ref{app:lit-rev}.

\section{The Problem of Learning Preferences}\label{sec:sec3-generalintro}

We consider the problem of learning human preferences over a finite set of actions $\mathcal{Y}$, conditional on contexts drawn from a finite set $\mathcal{X}$. A preference model is any function $\mathcal{P} : \mathcal{X} \times \mathcal{Y} \times \mathcal{Y} \to [0,1]$, and, given a context $x \in \mathcal{X}$ and actions $y,y' \in \mathcal{Y}$, we write suggestively $\mathcal{P}(y \succ y' \mid x)$ for the probability that $y$ is preferred to $y'$ given $x$\footnote{Descriptively, one can interpret this probability as the probability that a randomly chosen human annotator prefers $y$ to $y'$ given $x$. We argue that, normatively, it should instead be interpreted as the subjective probability that a single decision maker assigns to preferring $y$ to $y'$ given $x$.}. We call $(x,y,y') \in \mathcal{X} \times \mathcal{Y} \times \mathcal{Y}$ a preference triplet and we denote the underlying human preferences by $\mathcal{P}^*$. For a finite set $S$, we write $\Delta(S)$ for the set of probability distributions on $S$. Contexts are drawn from a fixed distribution $\rho \in \Delta(\mathcal{X})$, and a policy is a function $\pi : \mathcal{X} \to \Delta(\mathcal{Y})$ that maps each context $x$ to a distribution $\pi(\cdot \mid x)$ over actions. Our focus is on large language models (LLMs), where contexts are prompts, actions are responses, and policies are language models. We use the corresponding terms interchangeably throughout.

Since $\mathcal{P}^*$ is not (easily) observable, in practice one estimates a parametrized preference model $\mathcal{P}_\theta$ from a dataset of human preference annotations
\begin{equation*}
    \mathcal{D} = \{(x^j, y_w^j, y_l^j)\}_{j \in J},
\end{equation*}
where $J$ is finite, each label is sampled according to $\mathcal{P}^*$, and the subscripts $w$ and $l$ denote the preferred and the non-preferred response, respectively. The estimated model then serves as a proxy for $\mathcal{P}^*$ in downstream tasks, such as aligning LLMs via reinforcement learning.

However, $\mathcal{D}$ is often small relative to the pretraining corpus and typically covers only a limited part of the space of contexts and response pairs; outside this region, $\mathcal{P}_\theta$ is not directly constrained by data and may poorly approximate $\mathcal{P}^*$.  Moreover, the downstream use of the estimated preference model is not passive and policy optimization can systematically favor responses that $\mathcal{P}_\theta$ overvalues relative to $\mathcal{P}^*$, thereby amplifying the estimator's errors rather than averaging over them \citep{amodei2016concrete,DBLP:conf/nips/SkalseHKK22}.

\subsection{The Adversarial Stackelberg Game}\label{sec:3-3.1}

A long tradition of scholars has used model ensembles for uncertainty estimation \citep{DBLP:conf/colt/SeungOS92,DBLP:conf/nips/Lakshminarayanan17}. The underlying intuition is that models trained on the same finite data may fit the observed annotations similarly while making different predictions on less familiar inputs where they are not sufficiently constrained by the available data. Following in the same footsteps, we consider a trained ensemble of preference models $\{\mathcal{P}_{\theta_i}\}_{i \in I}$ and quantify disagreement among its members through the variance of their predictions:
\begin{equation*}
    \operatorname{Var}(x,y,y') \coloneq \operatorname{Var}\bigl(\{\mathcal{P}_{\theta_i}(y \succ y' \mid x\}_{i \in I}\bigr).
\end{equation*}
We use variance as an online estimate for the difficulty of a preference triplet $(x, y, y')$ for the ensemble. Because the score depends only on the ensemble's predictions, it can be easily evaluated on newly generated triplets during policy training without requiring additional preference annotations. High variance identifies preference triplets for which the ensemble members  disagree and assign different preference probabilities. Section~\ref{subsec:variance-experiments} provides evidence for this interpretation.

The natural next step is to search for such high variance triplets by optimizing against the uncertainty estimate itself. For this reason, we introduce a Leader-Follower game, also known as a Stackelberg game, in which two language models act sequentially. After observing a context $x \sim \rho(\cdot)$, the Leader, a policy $\pi$, generates a response $y \sim \pi(\cdot \mid x)$. The Follower, a map $\omega:\mathcal{X} \times \mathcal{Y} \to \Delta(\mathcal{Y})$, observes both the context and the Leader's realized response and generates a second response $y' \sim \omega(\cdot \mid x,y)$. The Leader aims to minimize the expected ensemble variance, while the Follower aims to maximize it, making use of its informational advantage. The game objective is therefore
\begin{equation}\label{eq:first-adv-game}
    \min_\pi \max_\omega \mathbb{E}_{x \sim \rho(\cdot), y \sim \pi(\cdot \mid x), y' \sim \omega(\cdot \mid x,y)}\bigl[\operatorname{Var}(x,y,y')\bigr].
\end{equation}
The Leader acts here as the ``defender'', seeking for completions that are robust in the sense that their comparison with an adversarially chosen alternative remains low-variance. The Follower acts as the``attacker'', using its informational advantage, i.e. its access to the Leader's response, to construct comparisons on which the ensemble members disagree. The Leader makes the search adversarial on both sides of the comparison by learning to generate responses that resist the Follower's attempts to induce disagreement. In fact, with a fixed Leader, the Follower would search for high-variance triplets against an unchanging response distribution, potentially exploiting weaknesses specific to that distribution. See Section~\ref{sec:main-text-further} for details.

Following standard practice in the RLHF and preference optimization literature \citep{DBLP:conf/nips/Ouyang0JAWMZASR22,DBLP:conf/nips/RafailovSMMEF23}, we add KL regularizers to \eqref{eq:first-adv-game} relative to reference policies $\piref$ and $\omegaref$ for the Leader and Follower. These regularizers limit policy drift and help preserve response quality. 

\subsection{Adversarial Triplets}\label{sec:adv-triplets-def}

We distinguish adversarial triplets, on which the ensemble's mean prediction reverses the ground-truth preference, from online candidates identified through ensemble disagreement. The former capture errors in the ensemble's average prediction and is the notion we ultimately care about, but verifying it requires querying $\mathcal{P}^*$, so it can only serve as an evaluation criterion. The latter, instead, capture disagreement among ensemble members, which we use as a proxy for uncertainty and can be computed from the ensemble alone, thus providing a criterion available online.

\begin{definition}[$\varepsilon$-adversarial triplet]\label{def:epsilon-adv}
For $\varepsilon\in[0,1/2)$, a preference triplet $(x,y,y')$ is $\varepsilon$-\textit{adversarial} with respect to the ensemble and the ground-truth preference $\mathcal{P}^*$ if the ensemble mean strictly reverses the ground-truth preference, and the ground truth is at least $\varepsilon$ away from indifference:
\begin{equation*}
    \Biggl(\mathcal{P}^*(y \succ y' \mid x)-\frac12\Biggr)
\Biggl( \frac{1}{|I|}
\sum_{i\in I}\mathcal{P}_{\theta_i}(y \succ y' \mid x)-\frac12\Biggr)
<0 \quad \text{and} \quad \left| \mathcal{P}^*(y \succ y' \mid x)-\frac12\right| > \varepsilon.
\end{equation*}
\end{definition}
The first condition requires that the ensemble mean and the ground-truth preference probability disagree on the preference triplet. The second condition excludes near-ties, as when the ground truth is almost indifferent a reversal might be better attributed to noise rather than to evidence of a poorly learned preference. Reversal is recovered when the margin is $\varepsilon=0$. 

To identify candidates without querying the ground truth, we use the ensemble disagreement score $\operatorname{Var}(x,y,y')$ as a proxy for difficulty for the current ensemble. We measure this score relative to a fixed reference distribution, so that candidate status depends on how much disagreement a triplet elicits compared with the reference policies' generations. Let $F_{\operatorname{Var}}$ denote the cumulative distribution function of $\operatorname{Var(x,y,y')}$ under the fixed reference distribution $(x,y,y') \sim \rho \otimes \piref \otimes \omegaref$. For $q\in(0,1)$, we write $F_{\operatorname{Var}}^{-1}(q)$ for its quantile function.

\begin{definition}[$\alpha$-online adversarial candidate]\label{def:alpha-online-adv}
For $\alpha\in(0,1)$, a preference triplet $(x,y,y')$
is an $\alpha$-\textit{online adversarial candidate} if
\begin{equation*}
    \operatorname{Var}(x,y,y')
\ge F_{\operatorname{Var}}^{-1}(1-\alpha).
\end{equation*}
\end{definition}
Smaller $\alpha$ targets more extreme disagreement relative to the reference distribution, which may indicate poorly learned preferences and reversals. This connection is only heuristic as ensemble members may agree on an incorrect preference, or disagree while their mean predicts the correct response. 

\section{Contextual Stackelberg Games: Theory and Algorithms}

We consider a contextual sequential zero-sum game between a Leader, who acts first after observing the context, and a Follower, who acts second after observing both the context and the Leader’s realized action. Let $\mathcal{X}$ denote the finite set of contexts and $\mathcal{Y}$ the finite common action set of the Leader and the Follower.\footnote{We assume that the context and action sets are finite for ease of mathematical exposition. Under appropriate topological assumptions, however, all the arguments that follow extend to arbitrary action sets.} In this Section, we opt to use a general utility function $u:\mathcal{X}\times\mathcal{Y}\times\mathcal{Y}\to\mathbb{R}$, since our theoretical results and algorithms apply to any such function under the stated assumptions. The game proposed in Section~\ref{sec:sec3-generalintro} is recovered as the special case $u=\operatorname{Var}$. Alternative utility choices, including the preference-optimization objective \citep{DBLP:conf/icml/MunosVCARGTGMFM24}, are discussed in Appendix~\ref{sec:utility-as-design}.

\subsection{Equilibrium Existence and Uniqueness}\label{subsec:equilbrium-theory}

The game develops as follows. The Leader observes a context $x \in \mathcal{X}$ and chooses a mixed action according to a policy $\pi:\mathcal{X}\to\Delta(\mathcal{Y})$. We denote the set of such policies by $\Pi$. The Follower observes both $x$ and the Leader’s realized action $y\sim\pi(\cdot\mid x)$, and chooses a mixed action according to a policy $\omega:\mathcal{X}\times\mathcal{Y}\to\Delta(\mathcal{Y})$. We denote the set of such policies by $\Omega$. Let $\rho\in\Delta(\mathcal{X})$ be a common prior over contexts, assumed without loss of generality to have full support. The game is zero-sum, with utilities $u_F=u$ and $u_L=-u$ for the Follower and Leader, respectively.

The solution concept for such sequential games is the Stackelberg Equilibrium (SE) \citep{vonstackelberg1934marktform,simaan1973stackelberg}. The Follower, moving second, best-responds to the Leader's realized action; the Leader, anticipating this best response, commits to the policy that minimizes the resulting expected utility. We report our definition below.

\begin{definition}[Stackelberg equilibrium]\label{def:Stackelberg-equilibrium}
    A pair of policies $(\pi^*,\omega^*)\in \Pi \times \Omega$ is a \textit{Stackelberg equilibrium} of the contextual sequential zero sum game with utility $u$ if
\begin{equation}
\begin{split}
  \pi^*    &\in \argmin_{\pi \in \Pi} \, \max_{\omega \in \Omega} \,
    \mathbb{E}_{x \sim \rho,\, y \sim \pi(\cdot \mid x),\, y' \sim \omega(\cdot \mid x, y)}
    \bigl[ u(x, y, y') \bigr], \\
  \omega^* &\in \argmax_{\omega \in \Omega} \,
    \mathbb{E}_{x \sim \rho,\, y \sim \pi^*(\cdot \mid x),\, y' \sim \omega(\cdot \mid x, y)}
    \bigl[ u(x, y, y') \bigr].
\end{split}
\end{equation}
\end{definition}
We regularize both players toward reference policies $\piref\in\Pi$ and $\omegaref\in\Omega$, assumed throughout to have full support. For regularization parameters $\tau^L,\tau^F\geq 0$, define a Stackelberg equilibrium of the regularized game to be a pair of policies $(\pi^*, \omega^*)$ such that
\begin{equation}\label{eq:general-stackelberg-equilibrium}
    \pi^* \in \argmin_{\pi \in \Pi} \, \max_{\omega \in \Omega} \, f(\pi, \omega), \quad \omega^* \in \argmax_{\omega \in \Omega}  f(\pi^*, \omega)
\end{equation}
with $f:\Pi \times \Omega \to \mathbb{R}$ the continuous convex-concave function defined as
\begin{equation}\label{eq:regularized-eq}
    f(\pi, \omega) \coloneqq \mathbb{E}_{x \sim \rho,\, y \sim \pi(\cdot \mid x),\, y' \sim \omega(\cdot \mid x, y)}\bigl[ u(x,y,y')\bigr]+\tau^L\KL_\rho(\pi \,\|\,\piref) - \tau^F\KL_{\rho, \pi}(\omega\,\|\,\omegaref),
\end{equation}
where $\KL_\rho$ and $\KL_{\rho, \pi}$ are the natural expected value extensions of $\KL$; for instance, $\KL_\rho(\pi\,\|\,\mu)\coloneq \mathbb{E}_{x \sim \rho(\cdot)}\bigl[\KL(\pi(\cdot\mid x) \,\|\, \mu(\cdot\mid x))\bigr]$. We will omit subscripts when the distributions are clear from context. Definition~\ref{def:Stackelberg-equilibrium} is recovered as the case $\tau^L=\tau^F=0$ of the regularized equilibrium in~\eqref{eq:general-stackelberg-equilibrium}.

Under these assumptions the regularized game always admits an equilibrium. When both regularization parameters are strictly positive, the equilibrium is moreover unique and both policies are available in closed form.

\begin{theorem}[Existence, uniqueness, and closed form]\label{th:existence-uniqueness-main-text}
    Suppose that $\mathcal{X}$ and $\mathcal{Y}$ are finite and that $\rho$, $\piref$, and $\omegaref$ have full support. Let $f$ be defined as in \eqref{eq:regularized-eq}, with $\tau^L, \tau^F\ge 0$. Then there exists a Stackelberg Equilibrium $(\pi^*, \omega^*)$ of the game \eqref{eq:general-stackelberg-equilibrium}. Moreover, if $\tau^L, \tau^F>0$ and $\piref, \omegaref$ are of full support, then the equilibrium is unique and
\begin{equation*}
\begin{aligned}
\omega^*(y' \mid x,y)
&=
\frac{
    \omegaref(y' \mid x,y)
    \exp\!\left(u(x,y,y')/\tau^F\right)
}{
    Z^F(x,y)
}, \\
\pi^*(y \mid x)
&=
\frac{
    \piref(y \mid x)
    \exp\!\left(
        -\frac{\tau^F}{\tau^L}\log Z^F(x,y)
    \right)
}{
    Z^L(x)
},
\end{aligned}
\end{equation*}
where $Z^F$ and $Z^L$ are the appropriately defined normalization functions.
\end{theorem}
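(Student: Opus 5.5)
The plan is to reduce the bilevel problem to two nested, decoupled optimizations by exploiting the sequential information structure: the Follower's policy is indexed by $(x,y)$, so the inner maximization splits pointwise. For a fixed $\pi$ we write
\[
f(\pi,\omega)=\EE_{x\sim\rho}\sum_{y}\pi(y\mid x)\Bigl[g_{x,y}(\omega(\cdot\mid x,y))\Bigr]+\tau^L\KL_\rho(\pi\,\|\,\piref),
\]
with $g_{x,y}(q)=\sum_{y'}q(y')u(x,y,y')-\tau^F\KL(q\,\|\,\omegaref(\cdot\mid x,y))$. Since $\pi(y\mid x)\ge 0$, any $\omega$ that maximizes each $g_{x,y}$ separately maximizes $f(\pi,\cdot)$. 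Such an $\omega$ exists by compactness of the simplex and continuity of $g_{x,y}$ (using full support of $\omegaref$, so the KL term is finite and continuous). Crucially, the maximizing $\omega$ does not depend on $\pi$. Setting $V(x,y)=\max_q g_{x,y}(q)$, we get $\max_\omega f(\pi,\omega)=\EE_x\sum_y\pi(y\mid x)V(x,y)+\tau^L\KL_\rho(\pi\,\|\,\piref)$.

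\textbf{Existence.} The Leader's reduced objective $h(\pi)$ is linear in $\pi$ plus a continuous term; the KL term is finite and continuous because $\piref$ has full support. It is therefore continuous on the compact set $\Pi$ and attains a minimum $\pi^*$. Taking $\omega^*$ to be a pointwise maximizer of every $g_{x,y}$ (including those $y$ with $\pi^*(y\mid x)=0$) gives $\omega^*\in\argmax_\omega f(\pi^*,\omega)$. This works for all $\tau^L,\tau^F\ge0$: when $\tau^F=0$, $g_{x,y}$ is linear and is maximized by a point mass on an argmax of $u(x,y,\cdot)$.

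\textbf{Closed form and uniqueness when $\tau^L,\tau^F>0$.} We apply the Gibbs variational principle: for $\tau>0$ and full-support $\mu$, the map $q\mapsto\langle q,r\rangle-\tau\KL(q\,\|\,\mu)$ is strictly concave. Its unique maximizer is $q\propto\mu e^{r/\tau}$, with maximal value $\tau\log\sum\mu e^{r/\tau}$. Applying this to $g_{x,y}$ yields the stated $\omega^*$ and $V(x,y)=\tau^F\log Z^F(x,y)$. The Leader then minimizes $\sum_y\pi(y\mid x)\,\tau^F\log Z^F(x,y)+\tau^L\KL(\pi(\cdot\mid x)\,\|\,\piref(\cdot\mid x))$ separately for each $x$; this is legitimate since $\rho$ has full support. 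Applying Gibbs again with $r=-\tau^F\log Z^F$ and $\tau=\tau^L$ gives the unique minimizer $\pi^*\propto\piref\exp(-\tfrac{\tau^F}{\tau^L}\log Z^F)$, which is the stated formula.

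The main subtlety is uniqueness of $\omega^*$. The second condition of \eqref{eq:general-stackelberg-equilibrium} only pins down $\omega^*(\cdot\mid x,y)$ where $\pi^*(y\mid x)\rho(x)>0$. Here the closed form of $\pi^*$ shows it has full support, since $\piref$ has full support and the exponential factor is positive. Hence every $(x,y)$ receives positive weight, and strict concavity of each $g_{x,y}$ forces $\omega^*$ to equal the Gibbs form everywhere. Uniqueness of $\pi^*$ follows from strict convexity of $h$, which is inherited from the KL term with $\tau^L>0$ and full-support $\rho$.
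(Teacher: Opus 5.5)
Your proof is correct. The closed-form and uniqueness part matches the paper's argument. The paper also solves the Follower's problem pointwise in $(x,y)$ with the Gibbs variational principle (Lemma~\ref{lem:variational-minimization}, proved via the identity $\langle q,h\rangle+\tau\KL(q\,\|\,r)=-\tau\log Z+\tau\KL(q\,\|\,q^*)$). It then substitutes $\tau^F\log Z^F$ into the Leader's objective and applies the lemma again context by context, using full support of $\rho$. Finally it uses full support of $\pi^*$ to pin down $\omega^*$ at every $(x,y)$, which is exactly the subtlety you flag.

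Where you genuinely differ is existence. The paper proves a general statement (Theorem~\ref{th:existence}): for any continuous convex-concave $f$ on the compact convex set $\Pi\times\Omega$, Sion's minimax theorem together with Berge's maximum theorem yields a saddle point. That saddle point is in particular a Stackelberg equilibrium, and the argument also gives $\max_\omega\min_\pi f=\min_\pi\max_\omega f$.

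You instead exploit the sequential information structure. Because $\omega$ is indexed by $(x,y)$ and $\pi$ enters only through nonnegative weights, the Follower's best response can be chosen independently of $\pi$. Hence $\max_\omega f(\pi,\omega)$ is an explicit continuous function of $\pi$, and the Weierstrass theorem suffices. Your route is more elementary: it needs no minimax theorem, no Berge, and in fact no convexity at all. It also handles $\tau^F=0$ or $\tau^L=0$ transparently.

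Your construction silently gives the paper's saddle point as well. Your $\omega^*$ maximizes every $g_{x,y}$, not only those on the support of $\pi^*$, so $f(\pi,\omega^*)=h(\pi)\ge h(\pi^*)=f(\pi^*,\omega^*)$ for all $\pi$. What the paper's route buys in exchange is generality: it applies to any continuous convex-concave objective, including ones where the best response is coupled to $\pi$. That is why the appendix states existence in that form.
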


The closed-form policies highlight the order of play. Conditional on the pair $(x,y)$, the Follower reweights its reference policy toward actions with higher utility. The quantity $\tau^F\log Z^F(x,y)$ is the Follower's optimal regularized value at $(x,y)$; the Leader, in turn, reweights its reference policy toward actions for which this value is smaller. We refer the reader to Appendix~\ref{app-th} for details on the proof.

\subsection{Computing Stackelberg Equilibria}

To compute the Stackelberg equilibrium of the Leader--Follower game introduced in Section~\ref{subsec:equilbrium-theory}, we introduce \textbf{S}tackelberg \textbf{Mi}rror \textbf{D}escent--\textbf{A}scent (\textbf{\textsc{smida}}), a two-timescale algorithm presented here in its tabular form. We then prove that when both regularization parameters are strictly positive and the step sizes satisfy suitable conditions, the last iterate converges linearly in KL divergence to the unique Stackelberg equilibrium of Theorem~\ref{th:existence-uniqueness-main-text}, with an explicit rate for each player and uniformly over contexts. 

Our construction follows the entropic mirror-descent approach of NLHF \citep{DBLP:conf/icml/MunosVCARGTGMFM24}. Mirror descent optimizes a first-order approximation of the objective while penalizing departure from the current iterate through a Bregman divergence \citep{bubeck2015convex}; on the probability simplex, choosing negative entropy as the mirror map gives the KL divergence. In Algorithm~\ref{alg:smida}, we apply this geometry to each player's conditional policy: the Leader takes a descent mirror step on the cost induced by the current Follower, while the Follower takes an ascent mirror step on its utility conditional on the Leader's realized action.

At each step $t$, the Leader maintains a probability vector $\pi_t(\cdot \mid x)$ for every context $x$, and the Follower a probability vector $\omega_t(\cdot \mid x,y)$ for every context--action pair $(x,y)$. Given the current Follower policy $\omega_t$, the Leader faces the conditional cost
\begin{equation}\label{eq:smida-current-cost}
  \widehat{r}_t(x,y) \coloneq \sum_{y'\in\mathcal Y}\omega_t(y'\mid x,y)u(x,y,y')
  -\tau^F\KL\!\left(
    \omega_t(\cdot \mid x,y)\,\|\,
    \omegaref(\cdot\mid x,y)
  \right),
\end{equation}
that is, the Follower's regularized utility at $(x,y)$ evaluated at its current policy rather than at its best response.

Let $\eta^L,\eta^F>0$ be the step sizes. The relative timescales are determined by the effective steps $\eta^L\tau^L$ and $\eta^F\tau^F$. We require
\begin{equation}\label{eq:step-sizes}
  0<\eta^L\tau^L< \eta^F\tau^F\leq1,
\end{equation}
so that the Follower, which solves the inner maximization, is the fast player. When $\tau^L=\tau^F$, this reduces to $\eta^F>\eta^L$. Appendix~\ref{subsec:appB1} further discusses the faster-Follower timescale and shows how the Follower's convergence controls the error in the conditional costs used by the Leader.

\begin{algorithm}[t]
\caption{Stackelberg Mirror Descent--Ascent (\textbf{\textsc{smida}}), tabular form}
\label{alg:smida}
\begin{algorithmic}[1]
\Require Utility $u$; full-support references $\piref,\,\omegaref$; regularization weights $\tau^{L},\tau^{F}>0$; step sizes satisfying \eqref{eq:step-sizes}
\State Initialize $\pi_{0}\gets\piref$ and $\omega_{0}\gets\omegaref$
\For{$t=0,1,2,\dots$}
    \State Compute $\widehat{r}_{t}(x,y)$ from \eqref{eq:smida-current-cost} for every $(x,y)$
    \State Leader (slow, descent): for every $x$,
    \vspace{-0.5\baselineskip}
    \[
        \pi_{t+1}(\cdot \mid x)\;\propto\;
        \pi_{t}(\cdot\mid x)^{1-\eta^L\tau^L}\,
        \piref(\cdot\mid x)^{\eta^L\tau^L}
        \exp\!\bigl(-\eta^{L}\,\widehat{r}_{t}(x,\cdot)\bigr)
    \]
    \vspace{-\baselineskip}
    \State Follower (fast, ascent): for every $(x,y)$,
    \vspace{-0.5\baselineskip}
    \[
        \omega_{t+1}(\cdot\mid x,y)\;\propto\;
        \omega_{t}(\cdot\mid x,y)^{1-\eta^F\tau^F}\,
        \omegaref(\cdot\mid x,y)^{\eta^F\tau^F}
        \exp\!\bigl(\eta^{F}\,u(x,y,\cdot)\bigr)
    \]
    \vspace{-\baselineskip}
\EndFor
\end{algorithmic}
\end{algorithm}
The proportionality signs in Algorithm~\ref{alg:smida} denote normalization over the displayed action argument. Each update first forms a geometric mixture of the current policy and its reference, then exponentially reweights this mixture toward lower conditional costs for the Leader or higher utilities for the Follower. We next establish last-iterate convergence of \textbf{\textsc{smida}} to the regularized Stackelberg equilibrium of Theorem~\ref{th:existence-uniqueness-main-text}. Both players converge linearly in both directions of the KL divergence, the Follower with factor $1-\eta^F\tau^F$ and the Leader with the slower factor $1-\eta^L\tau^L$.

\begin{theorem}[Last iterate convergence of \textbf{\textsc{smida}}]\label{thm:smida-main}
Suppose that $\mathcal X$ and $\mathcal Y$ are finite, that $\rho$, $\piref$, and $\omegaref$ have full support, and that $\tau^L,\tau^F>0$. Run Algorithm~\ref{alg:smida} with
\begin{equation*}
    0<\eta^L\tau^L<\eta^F\tau^F\leq1.
\end{equation*}
Then every iterate has full support, and the last iterate $(\pi_t,\omega_t)$ converges to the unique Stackelberg equilibrium $(\pi^*,\omega^*)$. More precisely, there exist constants $C_L,C_F>0$ such that, for every $t\geq1$, the Leader satisfies
\begin{equation*}
    \max\left\{
\KL_{\rho}(\pi^* \,\|\, \pi_t),
\KL_{\rho}(\pi_t  \,\|\, \pi^*)
\right\} \leq
C_L\left(1-\eta^L\tau^L\right)^t,
\end{equation*}
while the Follower satisfies
\[
\max\left\{
\KL_{\rho,\pi^*}(\omega^* \,\|\, \omega_t),
\KL_{\rho,\pi^*}(\omega_t  \,\|\, \omega^*)
\right\}
\leq
C_F\left(1-\eta^F\tau^F\right)^t.
\]
\end{theorem}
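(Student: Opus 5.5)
The plan is to work entirely in log-space, where each \textbf{\textsc{smida}} update is an affine contraction. The first step is full support: by induction from $\pi_0=\piref$ and $\omega_0=\omegaref$, each update multiplies strictly positive vectors by finite positive factors, so every iterate has full support and all logarithms below are finite.

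For vectors $p,q\in\Delta(\mathcal Y)$ with full support, write $\operatorname{osc}(h)\coloneq\max h-\min h$. Since $\sum p=\sum q=1$, the function $\log(p/q)$ takes both a nonnegative and a nonpositive value. Hence both $\KL(p\,\|\,q)$ and $\KL(q\,\|\,p)$ are at most $\operatorname{osc}(\log p-\log q)$. Moreover, $\operatorname{osc}$ ignores additive constants, so the log-normalizers never need to be tracked.

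\textbf{Follower.} The Follower update does not involve $\pi_t$. Set $a\coloneq\eta^F\tau^F\in(0,1]$. By Theorem~\ref{th:existence-uniqueness-main-text},
\[
\log\omega^*=\log\omegaref+u/\tau^F-\log Z^F .
\]
Since $\eta^F u=a\,u/\tau^F$, the equilibrium is a fixed point of the update modulo constants. Consequently $h_t\coloneq\log\omega_t(\cdot\mid x,y)-\log\omega^*(\cdot\mid x,y)$ satisfies
\[
h_{t+1}=(1-a)h_t+c_t(x,y),
\]
with $c_t(x,y)$ constant in $y'$. Therefore
\[
\operatorname{osc}(h_t)=(1-a)^t\operatorname{osc}(h_0).
\]
Finiteness of $\mathcal X\times\mathcal Y$ and full support make $\operatorname{osc}(h_0)$ uniformly bounded by some $C_F$. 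This gives both KL bounds uniformly in $(x,y)$, hence also under $\rho\otimes\pi^*$.

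\textbf{Leader.} Set $b\coloneq\eta^L\tau^L\in(0,a)$. By the closed form in Theorem~\ref{th:existence-uniqueness-main-text}, with $r^*(x,y)\coloneq\tau^F\log Z^F(x,y)$,
\[
\log\pi^*=\log\piref-r^*/\tau^L+\text{const}.
\]
Hence $\pi^*$ is a fixed point of the Leader update with $\widehat r_t$ replaced by $r^*$. Writing $g_t\coloneq\log\pi_t(\cdot\mid x)-\log\pi^*(\cdot\mid x)$, the update gives
\[
g_{t+1}=(1-b)g_t-\eta^L\bigl(\widehat r_t(x,\cdot)-r^*(x,\cdot)\bigr)+\text{const}.
\]
The key coupling estimate is the exact Gibbs variational identity
\[
\sum_{y'}\omega(y')u-\tau^F\KL(\omega\,\|\,\omegaref)=r^*-\tau^F\KL(\omega\,\|\,\omega^*),
\]
which gives
\[
0\le r^*-\widehat r_t=\tau^F\KL\bigl(\omega_t\,\|\,\omega^*\bigr)\le\tau^F C_F(1-a)^t
\]
uniformly in $(x,y)$. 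In particular $\operatorname{osc}(\widehat r_t-r^*)\le\tau^F C_F(1-a)^t$, and so
\[
\operatorname{osc}(g_{t+1})\le(1-b)\operatorname{osc}(g_t)+\eta^L\tau^F C_F(1-a)^t.
\]
Unrolling this recursion yields a geometric series:
\[
\sum_{s<t}(1-b)^{t-1-s}(1-a)^s\le\frac{(1-b)^{t-1}}{1-\frac{1-a}{1-b}},
\]
which is finite precisely because $a>b$ (the two-timescale condition). Hence
\[
\operatorname{osc}(g_t)\le C_L(1-b)^t
\]
for a constant $C_L$ absorbing $\operatorname{osc}(g_0)$, $\eta^L\tau^F C_F$ and $(1-b)^{-1}$. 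The KL bounds then follow as for the Follower, and averaging over $\rho$ preserves them.

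The main obstacle is this Leader–Follower coupling: bounding the error in the Leader's cost by a quantity that decays at the Follower's faster rate. The exact identity above handles it cleanly, and the strict inequality $\eta^L\tau^L<\eta^F\tau^F$ is what keeps the accumulated error at the Leader's rate $(1-\eta^L\tau^L)^t$ rather than something slower.
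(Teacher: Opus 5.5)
Your proposal is correct and follows the paper's proof essentially step for step. It uses the same bound of both KL directions by $\operatorname{osc}(\log p-\log q)$, the same log-space contraction for the Follower, and the same identity $r-\widehat r_t=\tau^F\KL(\omega_t\,\|\,\omega^*)\ge 0$ to couple the players. The unrolled Leader recursion is also the same, and your geometric-sum bound simplifies to exactly the paper's $(1-\eta^L\tau^L)^t/(\eta^F\tau^F-\eta^L\tau^L)$. The only cosmetic point is that the constants must be strictly positive (e.g.\ replace them by $\max\{1,\cdot\}$, as the paper does) in case the initial oscillations vanish.
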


Appendix~\ref{app:tabular-algo} proves a stronger statement, with bounds that hold uniformly over contexts and conditioning actions, and gives explicit expressions for $C_L$ and $C_F$.

\subsection{Scaling smida to Language Model Policies}\label{sec:deep-impl-maintext}

Motivated by \textbf{\textsc{smida}}, we introduce \textbf{\textsc{deep-smida}}, a neural policy-gradient algorithm designed for large context and action spaces $\mathcal{X}$ and $\mathcal{Y}$, with applications to LLMs in mind. We parameterize the Leader's and Follower's policies as $\pi_\theta$ and $\omega_\phi$, respectively, and optimize their regularized objectives using batched policy-gradient estimates. Both estimates are computed from the same Leader--Follower rollouts, with the Leader taking a descent step and the Follower an ascent step.

We highlight one implementation detail that is, to our knowledge, novel: jointly warming up the Leader's learning rate and the Follower-to-Leader learning-rate ratio. Preliminary experiments showed that training was highly sensitive to the learning rates $\eta^L$ and $\eta^F$ and often became unstable; standard warm-up techniques did not resolve this instability \citep{DBLP:conf/nips/KalraB24}. To address this, we linearly increase the ratio $\eta_t^F/\eta_t^L$ from $1$ to its target $\eta^F/\eta^L>1$, while warming up the Leader's learning rate $\eta_t^L$ linearly from $\eta_{\mathrm{init}}^L$ to $\eta^L$. At each iteration, the Follower's learning rate is the product of the current Leader rate and the current ratio. Both rates remain constant after warm-up, with targets chosen to satisfy $\eta^L\tau^L<\eta^F\tau^F$, following the timescale ordering in the tabular analysis of Theorem~\ref{thm:smida-main}. Algorithmic and implementation details are provided in Appendix~\ref{app:deep}.

\section{Experiments}\label{sec:experiments}

\subsection{Methodology}\label{subsec:meth}

We use UltraFeedback \citep{DBLP:conf/icml/CuiY0YH0NXXL0024} and adopt a synthetic oracle setup \citep{DBLP:conf/icml/GaoSH23} to label the original response pairs and comparisons generated by the adversarial game. Relabelling gives $89{,}625$ preference triplets across $15{,}053$ unique prompts. We train an ensemble of five preference models based on the architecture of \cite{DBLP:conf/icml/ZhangZWXG25}, using mutually disjoint training partitions at the prompt level and independent initialization and training seeds. We reserve $37{,}614$ triplets across $6{,}315$ unseen prompts for evaluation. More details are to be found in Appendix~\ref{app:implementation}, while in Appendix~\ref{sec:preference-model-hacking} we provide an heuristic argument as to why preference models are more subject to hacks than reward models.

\subsection{Variance as an Estimate for Prediction Difficulty}\label{subsec:variance-experiments}

We test whether ensemble variance is an informative proxy for prediction error \citep{gleave2022uncertainty} on the held-out prompt set of Section~\ref{subsec:meth}. For each triplet, we compare the preference induced by the five-member ensemble's mean probability with the binary oracle label. Prediction difficulty here refers to the ensemble's tendency to disagree with the oracle, whereas $\operatorname{Var}(x,y,y')$ requires no oracle label and is available during policy training.

\begin{figure}[t]
    \centering
    \includegraphics[width=\textwidth]{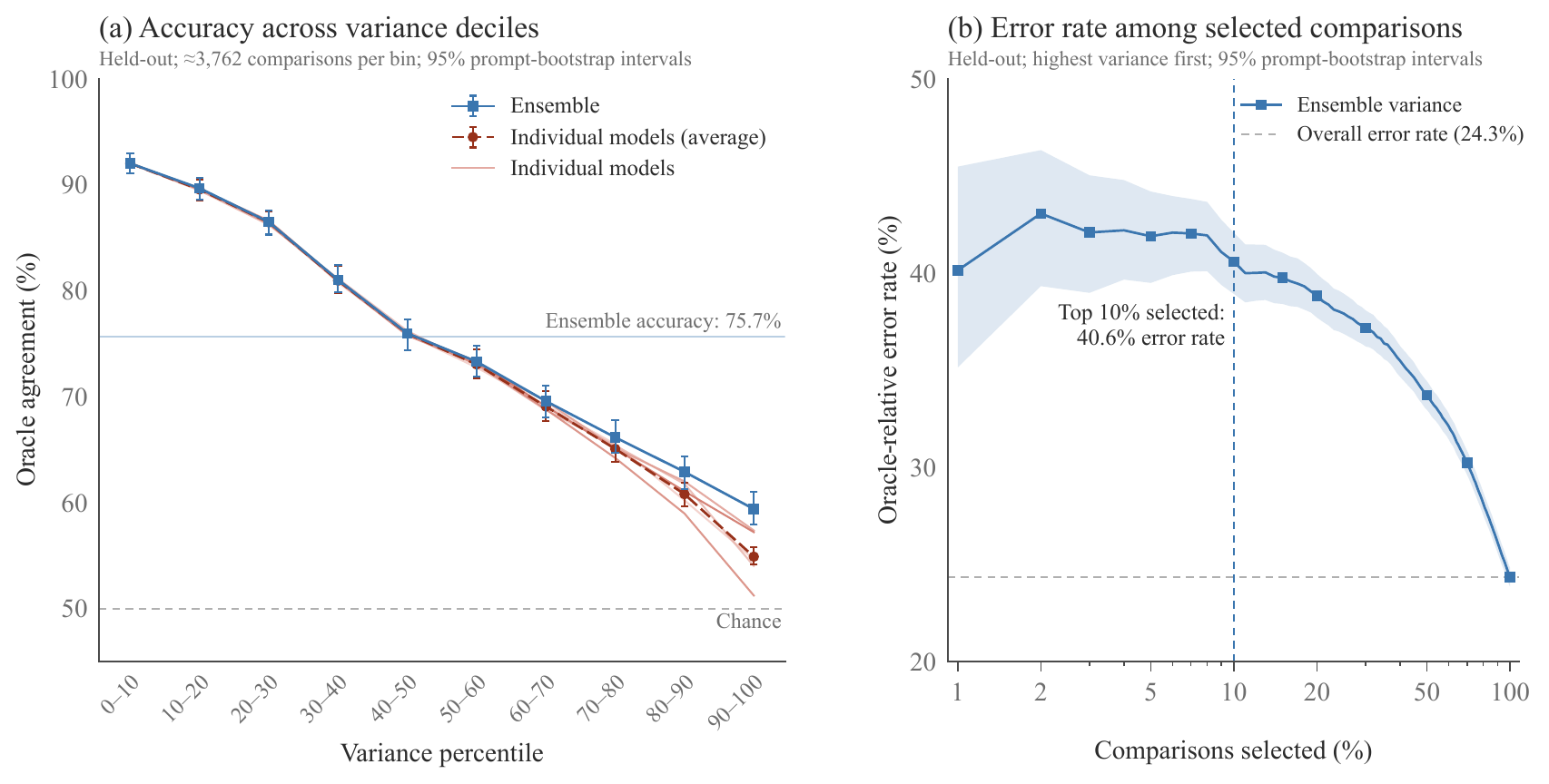}
    \caption{Ensemble variance and prediction difficulty on held out comparisons. \textbf{(a)}~Oracle agreement across variance deciles. \textbf{(b)}~Ensemble error rate among comparisons selected in decreasing order of variance. Error bars and shading show $95\%$ prompt-bootstrap intervals.}
    \label{fig:variance-proxy}
\end{figure}

Ensemble accuracy decreases monotonically across variance deciles, from $92.0\%$ in the lowest to $59.4\%$ in the highest (Figure~\ref{fig:variance-proxy}a). Since the Follower searches for high-variance triplets, error concentration in the upper tail is especially relevant: the top $10\%$ have an error rate of $40.6\%$, compared with $24.3\%$ overall (Figure~\ref{fig:variance-proxy}b). Across comparisons, variance achieves an area under the receiver operating characteristic curve (AUROC) of $0.670$ when ranking errors above correct predictions \citep{DBLP:journals/prl/Fawcett06}, indicating moderate discrimination. Other uncertainty metrics are provided in Appendix~\ref{sec:utility-as-design}.

\subsection{Generating Adversarial Responses} \label{sec:5.3}

We next ask whether the Leader--Follower game generates adversarial preference triplets on held-out prompts. Following Section~\ref{sec:adv-triplets-def}, we compare the trained and reference policies by measuring the frequency of $\alpha$-online adversarial candidates (Definition~\ref{def:alpha-online-adv}) and, using oracle labels, $\varepsilon$-adversarial triplets (Definition~\ref{def:epsilon-adv}).

All policies are initialized from the same base model, using the models and chat templates of Appendix~\ref{app:game-implementation}; the frozen initializations serve as $\piref$ and $\omegaref$. The utility is the ensemble variance of Section~\ref{sec:3-3.1}, rescaled to $[0,1]$ and computed using the frozen five-member ensemble of Section~\ref{app:preference-models}. We train with \textbf{\textsc{deep-smida}} (Algorithm~\ref{alg:deep-stackelberg}) on a set of prompts unseen during ensemble training, obtaining policies $(\widehat{\pi},\widehat{\omega})$. Appendix~\ref{app:game-implementation} provides hyperparameters and implementation details. We evaluate the trained pair $(\widehat{\pi},\widehat{\omega})$ and the reference pair $(\piref,\omegaref)$ on the same $2{,}500$ prompts, unseen during policy and ensemble training, with identical decoding settings. Each policy pair generates two response pairs per prompt, for a total of $5{,}000$ triplets. The empirical variance distribution of the reference triplets estimates $F_{\mathrm{Var}}$ and sets common candidate thresholds for both policy pairs. 

\subsubsection{Disagreement on Holdout Prompts}

We first assess whether the trained policies generate comparisons with greater ensemble disagreement on holdout prompts. Across the $5{,}000$ triplets generated by each policy pair, training increases mean ensemble variance from $0.0051$ to $0.0139$ ($95\%$ prompt-bootstrap interval: $[0.0133, 0.0145]$). The median rises from $0.00238$ to $0.00569$, and the trained pair produces higher mean variance on $68.5\%$ of evaluation prompts. The increase thus extends to the centre of the distribution and across prompts. Figure~\ref{fig:candidate-rate-and-margin}(a) shows how this shift translates into the yield of $\alpha$-online adversarial candidates, using common variance thresholds estimated from the reference generations. At $\alpha=0.1$, the trained pair gives $33.7\%$ candidates ($95\%$ interval: $[31.7\%,35.5\%]$), compared with $10.0\%$ for the reference pair. The relative gain increases as $\alpha$ decreases. At $\alpha=0.05$ and $\alpha=0.01$, trained gives are $26.0\%$ and $11.2\%$, respectively, or $5.2$ and $11.2$ times the reference rates. Training thus increases the yield of comparisons with high disagreement over the reference.

\begin{figure}[t]
    \centering
    \includegraphics[width=\linewidth]{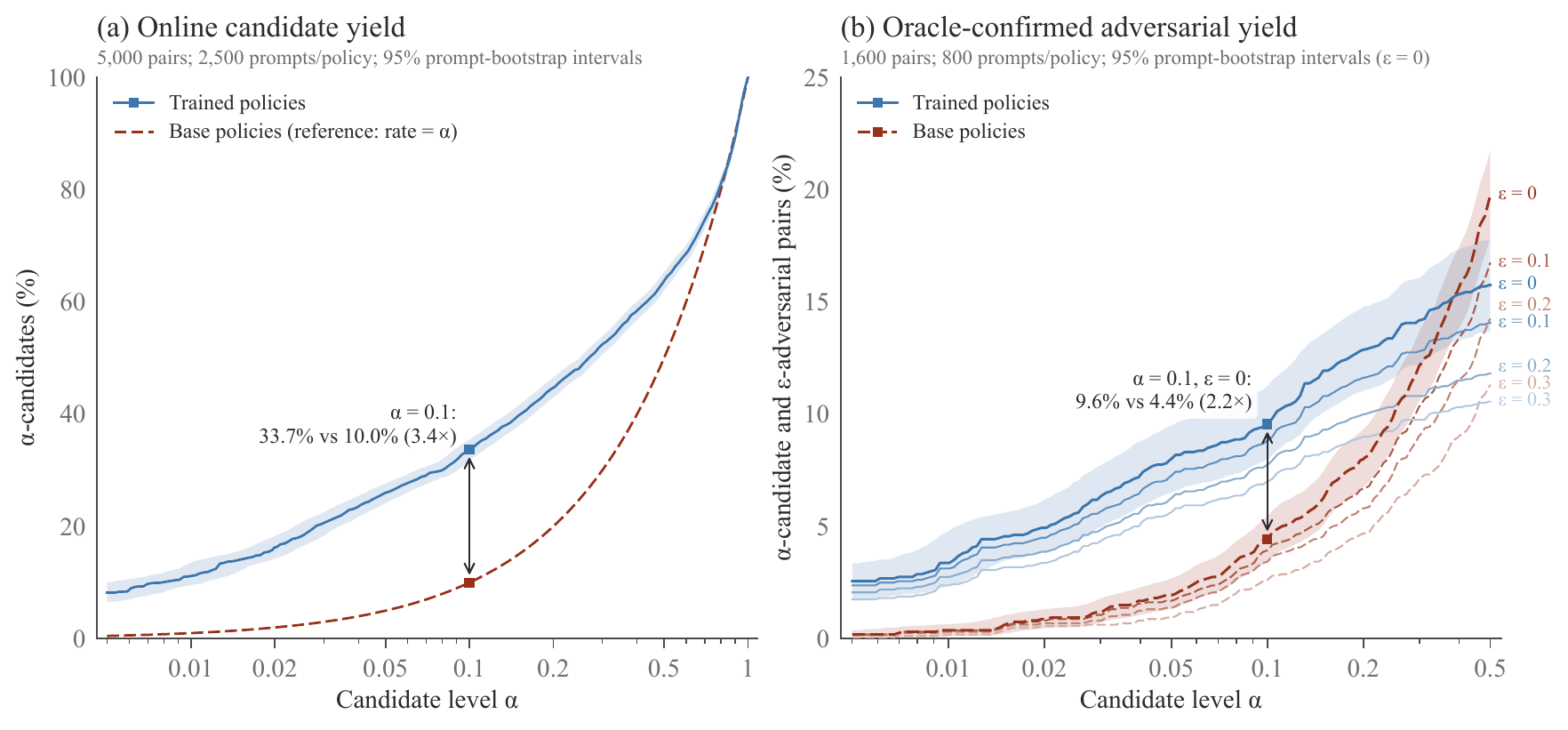}
    \caption{Online candidate yield and oracle-confirmed adversarial yield on holdout prompts. \textbf{(a)} Fraction of $5{,}000$ triplets per policy pair meeting the common reference-quantile threshold at level $\alpha$.
    \textbf{(b)} Fraction of all $1{,}600$ oracle-labelled triplets per policy pair that are both $\alpha$-online candidates and $\varepsilon$-adversarial. Blue solid curves denote trained policies; red dashed curves denote reference policies. Lighter curves indicate larger oracle margins. Shading shows $95\%$ prompt-bootstrap intervals}
    \label{fig:candidate-rate-and-margin}
\end{figure}

\subsubsection{Adversarial Triplets}\label{subsec:main-text-adver-exp}

Greater disagreement is useful for our purpose when it identifies comparisons on which the ensemble's mean preference is incorrect. We test this empirically on $1{,}600$ triplets from $800$ holdout prompts per policy pair, using the same synthetic oracle as in Section~\ref{subsec:meth}. For each triplet, we estimate $\mathcal{P}^*(y \succ y' \mid x)$ by averaging the valid binary verdicts from $11$ oracle queries, randomizing response order in each query. We apply the conditions of Definition~\ref{def:epsilon-adv}. Figure~\ref{fig:candidate-rate-and-margin}(b) reports the fraction of all labelled triplets that are both $\alpha$-online candidates and $\varepsilon$-adversarial. This joint yield measures how many oracle-confirmed errors pass the online criterion per generated triplet.

At $\alpha=0.1$ and $\varepsilon=0$, the trained pair gives $9.6\%$ ($95\%$ prompt-bootstrap interval: $[7.97, 11.25]$), compared with $4.4\%$ for the reference pair. At $\alpha=0.05$, the respective yields are $8.0\%$ ($95\%$ interval: $[6.51, 9.50]$) and $1.9\%$. The advantage persists for reversals with substantial oracle margins: at $\alpha=0.1$ and $\varepsilon=0.3$, the trained pair yields $7.0\%$ against $2.6\%$. These results support the game as a way to generate adversarial comparisons accessible to online selection.

The online criterion also concentrates errors within the trained policies' generations. At $\alpha=0.1$, $29.7\%$ of their candidates reverse the preferences, compared with $19.4\%$ of their generations overall, capturing $49.2\%$ of all their reversals. The reference pair nevertheless has a higher overall reversal rate ($34.3\%$) and the scope of this simple example is thus specific. We obtain a higher per-triplet yield of errors exceeding a disagreement threshold, with errors defined relative to the oracle.

\subsection{Further Experiments}\label{sec:main-text-further}

We briefly report two further experiments, with details in Appendix~\ref{app:further-exp}. First, we examine an indirect benefit of our method, the Leader's alignment with preferences. We conjecture that minimizing disagreement against an moving Follower may help preserve response quality. In fact, in a round-robin evaluation, our Leader achieves an oracle preference rate of $54.89\%$ against NLHF \citep{DBLP:conf/icml/MunosVCARGTGMFM24} and $50.70\%$ against the SLHF Leader \citep{pasztor2026stackelberg}. A Leader trained only to resist disagreement thus remains competitive with policies trained to be preferred (Appendix~\ref{app:roundrobin}).

To isolate the role of the Leader, we train the Follower only, while keeping the Leader fixed in \eqref{eq:first-adv-game}. We conjecture that a Follower facing a fixed Leader can exploit idiosyncratic patterns of that Leader's response distribution, rather than learn to construct useful responses. We therefore compare the two Followers on IFEval \citep{zhou2023instruction}, an instruction following benchmark. The Follower trained with a moving Leader achieves higher accuracy on all four aggregate metrics. Training against an adaptive Leader thus discourages strategies that induce disagreement at the expense of instruction following (Appendix~\ref{app:ifeval}).

\section{Conclusions, Broader Impact, and Limitations}

The goal of the present work is first and foremost to provide a framework for addressing a basic, almost naïve difficulty in alignment, that the evaluator is by necessity an imperfect proxy for human preferences and intents. Our Stackelberg formulation uses adversarial interaction to search for comparisons that expose these imperfections and provide candidates for additional supervision.  Due to compute constraints, our experiments are limited to small models and relatively simple tasks, with errors assessed against a synthetic oracle. Whether correcting the errors identified by this search makes the evaluator more robust remains to be established. A natural next step is therefore to connect adversarial search with repeated annotation and evaluator updates, and to test the resulting process on larger frontier models and on open-ended, long-horizon tasks whose outcomes are difficult to verify. More broadly, we hope that the ensemble-variance game illustrates a broader point, that the same optimizing behavior that exploits an evaluator can also help improve the same supervision on which alignment depends.

\subsection*{AI use statement}

In this work, we used generative AI tools to provide feedback on research methodology and experimental design, implement methods, cleaning and formatting the datasets, and check mathematical arguments. We also used a language model as a synthetic preference oracle to label response comparisons, as described in the experimental methodology. We have not used AI tools to propose or refine hypotheses, to interpret experimental results, or to set the direction of the research. Additionally, we used generative AI tools to identify, summarize, and analyze relevant literature, write and refine software code, create or modify scientific figures, and polish and contribute to the manuscript’s writing and editing. The models used include GPT-5.6 Sol, GPT-6 Astra (OpenAI) and Fable 5, Fable 5.1, Opus 5, Opus 5.5 (Anthropic). We, the authors, take full responsibility for the final contents of this work, including text, claims or artifacts produced with the aid of generative AI tools. 

\subsection*{Ethics statement}

We do not foresee significant ethical concerns arising from the experiments reported in this work. Our experiments involve no human annotators, all preference labels come from a synthetic oracle, applied to prompts from publicly available datasets. The errors we report are therefore defined relative to this oracle, which may or may not have biases of its own, and should not be read as departures from human judgment. The framework is intended to identify weaknesses in preference evaluators and to direct additional supervision toward them, with the broader aim of making alignment more reliable. 

\subsection*{Reproducibility statement}

All data used in this work are publicly available or will be made public. All code will be released upon acceptance. We refer the reader to the implementation Section~\ref{subsec:meth} and Appendix~\ref{app:deep} and \ref{app:implementation} for details of the experimental setup, training procedures, and hyperparameters.

\subsubsection*{Acknowledgments}

Use unnumbered third level headings for the acknowledgments. All acknowledgments, including those to funding agencies, go at the end of the paper.

\bibliography{iclr2027_conference}

@techreport{openai2026huggingface,
  author      = {{OpenAI}},
  title       = {{OpenAI}--{Hugging Face} Incident Technical Report},
  institution = {OpenAI},
  year        = {2026},
  month       = aug,
  url         = {https://cdn.openai.com/pdf/67869394-cb91-4c12-888c-5cbd85c7814c/OpenAI-Hugging-Face%20Incident-Technical-Report.pdf}
}

@techreport{metr2026huggingface,
  author      = {{METR}},
  title       = {Brief Independent Investigation of Agents' Behavior,
                 Reasoning and Collaboration in the {OpenAI} /
                 {Hugging Face} Hacking Incident},
  institution = {METR},
  year        = {2026},
  month       = aug,
  url         = {https://metr.org/hugging-face-incident-report-aug-2026.pdf}
}

@misc{deepseekai2026deepseekv4,
  title  = {{DeepSeek-V4}: Towards Highly Efficient Million-Token Context Intelligence},
  author = {{DeepSeek-AI}},
  year   = {2026},
  url    = {https://huggingface.co/deepseek-ai/DeepSeek-V4-Flash}
}

@article{qwen2024qwen25,
  title         = {{Qwen2.5 Technical Report}},
  author        = {{Qwen Team}},
  year          = {2024},
  journal = {arXiv preprint arXiv:2412.15115},
}

@book{aliprantis2006infinite,
  title={Infinite dimensional analysis: a hitchhiker’s guide},
  author={Aliprantis, Charalambos D. and Border, Kim C.},
  year={2006},
  publisher={Springer}
  
}

@article{amodei2016concrete,
  title   = {Concrete Problems in {AI} Safety},
  author  = {Amodei, Dario and Olah, Chris and Steinhardt, Jacob
             and Christiano, Paul and Schulman, John and Man{\'e}, Dan},
  journal = {arXiv preprint arXiv:1606.06565},
  year    = {2016},
}

@book{bostrom2014superintelligence,
  author    = {Bostrom, Nick},
  title     = {Superintelligence: Paths, Dangers, Strategies},
  year      = {2014},
  publisher = {Oxford University Press},
  address   = {Oxford},
  isbn      = {9780199678112}
}

@article{bubeck2015convex,
  author  = {Bubeck, S{\'e}bastien},
  title   = {Convex Optimization: Algorithms and Complexity},
  journal = {Foundations and Trends in Machine Learning},
  volume  = {8},
  number  = {3--4},
  pages   = {231--357},
  year    = {2015},
}

@inproceedings{
bukharin2025adversarial,
title={Adversarial Training of Reward Models},
author={Alexander Bukharin and Haifeng Qian and Shengyang Sun and Adithya Renduchintala and Soumye Singhal and Zhilin Wang and Oleksii Kuchaiev and Olivier Delalleau and Tuo Zhao},
booktitle={Second Conference on Language Modeling},
year={2025},
}

@inproceedings{DBLP:conf/iclr/ChakrabortyBKWM24,
  author       = {Souradip Chakraborty and
                  Amrit Singh Bedi and
                  Alec Koppel and
                  Huazheng Wang and
                  Dinesh Manocha and
                  Mengdi Wang and
                  Furong Huang},
  title        = {{PARL:} {A} Unified Framework for Policy Alignment in Reinforcement
                  Learning from Human Feedback},
  booktitle    = {The Twelfth International Conference on Learning Representations,
                  {ICLR}},
  year         = {2024},
  bibsource    = {dblp computer science bibliography, https://dblp.org}
}

@inproceedings{DBLP:conf/nips/ChristianoLBMLA17,
  author       = {Paul F. Christiano and
                  Jan Leike and
                  Tom B. Brown and
                  Miljan Martic and
                  Shane Legg and
                  Dario Amodei},
  title        = {Deep Reinforcement Learning from Human Preferences},
  booktitle    = {Advances in Neural Information Processing Systems 30: Annual Conference
                  on Neural Information Processing Systems},
  year         = {2017},
}

@inproceedings{DBLP:conf/nips/XuZJJ25,
  author       = {Xu Chu and Zhixin Zhang and Tianyu Jia and Yujie Jin},
  title        = {Stackelberg Self-Annotation: {A} Robust Approach to Data-Efficient
                  {LLM} Alignment},
  booktitle    = {Advances in Neural Information Processing Systems 38: Annual Conference
                  on Neural Information Processing Systems},
  year         = {2025},
  bibsource    = {dblp computer science bibliography, https://dblp.org}
}

@inproceedings{DBLP:conf/iclr/CosteAK024,
  author       = {Thomas Coste and
                  Usman Anwar and
                  Robert Kirk and
                  David Krueger},
  title        = {Reward Model Ensembles Help Mitigate Overoptimization},
  booktitle    = {The Twelfth International Conference on Learning Representations,
                  {ICLR}},
  year         = {2024},
  bibsource    = {dblp computer science bibliography, https://dblp.org}
}

@inproceedings{DBLP:conf/icml/CuiY0YH0NXXL0024,
  author       = {Ganqu Cui and
                  Lifan Yuan and
                  Ning Ding and
                  Guanming Yao and
                  Bingxiang He and
                  Wei Zhu and
                  Yuan Ni and
                  Guotong Xie and
                  Ruobing Xie and
                  Yankai Lin and
                  Zhiyuan Liu and
                  Maosong Sun},
  title        = {{ULTRAFEEDBACK:} Boosting Language Models with Scaled {AI} Feedback},
  booktitle    = {Forty-first International Conference on Machine Learning, {ICML}},
  series       = {Proceedings of Machine Learning Research},
  volume       = {235},
  year         = {2024},
  bibsource    = {dblp computer science bibliography, https://dblp.org}
}

@article{DBLP:journals/prl/Fawcett06,
  author       = {Tom Fawcett},
  title        = {An introduction to {ROC} analysis},
  journal      = {Pattern Recognit. Lett.},
  volume       = {27},
  number       = {8},
  pages        = {861--874},
  year         = {2006},
  bibsource    = {dblp computer science bibliography, https://dblp.org}
}

@inproceedings{DBLP:conf/icml/GaoSH23,
  author       = {Leo Gao and
                  John Schulman and
                  Jacob Hilton},
  title        = {Scaling Laws for Reward Model Overoptimization},
  booktitle    = {International Conference on Machine Learning, {ICML}},
  series       = {Proceedings of Machine Learning Research},
  volume       = {202},
  year         = {2023},
  bibsource    = {dblp computer science bibliography, https://dblp.org}
}

@article{gleave2022uncertainty,
  title   = {Uncertainty Estimation for Language Reward Models},
  author  = {Gleave, Adam and Irving, Geoffrey},
  journal = {arXiv preprint arXiv:2203.07472},
  year    = {2022}
}

@inproceedings{DBLP:conf/icml/GreenblattSSR24,
  author       = {Ryan Greenblatt and
                  Buck Shlegeris and
                  Kshitij Sachan and
                  Fabien Roger},
  title        = {{AI} Control: Improving Safety Despite Intentional Subversion},
  booktitle    = {Forty-first International Conference on Machine Learning, {ICML}},
  series       = {Proceedings of Machine Learning Research},
  volume       = {235},
  year         = {2024},
  bibsource    = {dblp computer science bibliography, https://dblp.org}
}

@article{gupta2025pessimism,
  title   = {Mitigating Preference Hacking in Policy Optimization with Pessimism},
  author  = {Gupta, Dhawal and Fisch, Adam and Dann, Christoph
             and Agarwal, Alekh},
  journal = {arXiv preprint arXiv:2503.06810},
  year    = {2025},
}

@inproceedings{eisenstein2024helping,
  title     = {Helping or Herding? Reward Model Ensembles Mitigate but do not Eliminate Reward Hacking},
  author    = {Eisenstein, Jacob and Nagpal, Chirag and Agarwal, Alekh and Beirami, Ahmad and D'Amour, Alex and Dvijotham, DJ and Fisch, Adam and Heller, Katherine and Pfohl, Stephen and Ramachandran, Deepak and Shaw, Peter and Berant, Jonathan},
  booktitle = {First Conference on Language Modeling},
  year      = {2024},
}

@inproceedings{DBLP:conf/nips/KalraB24,
  author       = {Dayal Singh Kalra and
                  Maissam Barkeshli},
  title        = {Why Warmup the Learning Rate? Underlying Mechanisms and Improvements},
  booktitle    = {Advances in Neural Information Processing Systems 37: Annual Conference
                  on Neural Information Processing Systems},
  year         = {2024},
  bibsource    = {dblp computer science bibliography, https://dblp.org}
}

@inproceedings{DBLP:conf/iclr/KoolHW19a,
  author       = {Wouter Kool and
                  Herke van Hoof and
                  Max Welling},
  title        = {Buy 4 {REINFORCE} Samples, Get a Baseline for Free!},
  booktitle    = {Deep Reinforcement Learning Meets Structured Prediction, {ICLR}},
  year         = {2019},
  bibsource    = {dblp computer science bibliography, https://dblp.org}
}

@inproceedings{DBLP:conf/nips/Lakshminarayanan17,
  author       = {Balaji Lakshminarayanan and
                  Alexander Pritzel and
                  Charles Blundell},
  title        = {Simple and Scalable Predictive Uncertainty Estimation using Deep Ensembles},
  booktitle    = {Advances in Neural Information Processing Systems 30: Annual Conference
                  on Neural Information Processing Systems},
  year         = {2017},
  bibsource    = {dblp computer science bibliography, https://dblp.org}
}

@inproceedings{DBLP:conf/icml/LeeSA21,
  author       = {Kimin Lee and
                  Laura M. Smith and
                  Pieter Abbeel},
  title        = {{PEBBLE:} Feedback-Efficient Interactive Reinforcement Learning via
                  Relabeling Experience and Unsupervised Pre-training},
  booktitle    = {Proceedings of the 38th International Conference on Machine Learning,
                  {ICML}},
  series       = {Proceedings of Machine Learning Research},
  volume       = {139},
  year         = {2021},
  bibsource    = {dblp computer science bibliography, https://dblp.org}
}

@inproceedings{DBLP:conf/iclr/LiangSLA22,
  author       = {Xinran Liang and
                  Katherine Shu and
                  Kimin Lee and
                  Pieter Abbeel},
  title        = {Reward Uncertainty for Exploration in Preference-based Reinforcement
                  Learning},
  booktitle    = {The Tenth International Conference on Learning Representations, {ICLR}
                  },
  year         = {2022},
  bibsource    = {dblp computer science bibliography, https://dblp.org}
}

@article{DBLP:journals/jmlr/Lin0J25,
  author       = {Tianyi Lin and
                  Chi Jin and
                  Michael I. Jordan},
  title        = {Two-Timescale Gradient Descent Ascent Algorithms for Nonconvex Minimax
                  Optimization},
  journal      = {J. Mach. Learn. Res.},
  volume       = {26},
  pages        = {11:1--11:45},
  year         = {2025},
  bibsource    = {dblp computer science bibliography, https://dblp.org}
}

@inproceedings{DBLP:conf/iclr/LoshchilovH19,
  author       = {Ilya Loshchilov and
                  Frank Hutter},
  title        = {Decoupled Weight Decay Regularization},
  booktitle    = {7th International Conference on Learning Representations, {ICLR}},
  year         = {2019},
  bibsource    = {dblp computer science bibliography, https://dblp.org}
}

@inproceedings{
makar-limanov2024starlhf,
title={{STA}-{RLHF}: Stackelberg Aligned Reinforcement Learning with Human Feedback},
author={Jacob Makar-Limanov and Arjun Prakash and Denizalp Goktas and Nora Ayanian and Amy Greenwald},
booktitle={Coordination and Cooperation for Multi-Agent Reinforcement Learning Methods Workshop},
year={2024},
}

@article{manheim2018categorizing,
  title   = {Categorizing Variants of {Goodhart's Law}},
  author  = {Manheim, David and Garrabrant, Scott},
  journal = {arXiv preprint arXiv:1803.04585},
  year    = {2018},
}

@article{may1954intransitivity,
  title={Intransitivity, utility, and the aggregation of preference patterns},
  author={May, Kenneth O},
  journal={Econometrica: Journal of the Econometric Society},
  pages={1--13},
   volume  = {22},
  year={1954},
  publisher={JSTOR}
}

@inproceedings{melikidze2026activeultrafeedback,
  title     = {{ActiveUltraFeedback}: Efficient Preference Data Generation using Active Learning},
  author    = {Melikidze, Davit and Schneider, Marian and Lam, Jessica and
               Wertich, Martin and Hakimi, Ido and P{\'a}sztor, Barna and Krause, Andreas},
  booktitle = {Proceedings of the 43rd International Conference on Machine Learning},
  series    = {Proceedings of Machine Learning Research},
  volume    = {306},
  year      = {2026},
  address   = {Seoul, South Korea},
}

@inproceedings{DBLP:conf/icml/MuldrewHZB24,
  author       = {William Muldrew and
                  Peter Hayes and
                  Mingtian Zhang and
                  David Barber},
  title        = {Active Preference Learning for Large Language Models},
  booktitle    = {Forty-first International Conference on Machine Learning, {ICML}},
  series       = {Proceedings of Machine Learning Research},
  volume       = {235},
  year         = {2024},
  bibsource    = {dblp computer science bibliography, https://dblp.org}
}

@inproceedings{DBLP:conf/icml/MunosVCARGTGMFM24,
  author       = {R{\'{e}}mi Munos and
                  Michal Valko and
                  Daniele Calandriello and
                  Mohammad Gheshlaghi Azar and
                  Mark Rowland and
                  Daniel Guo and
                  Yunhao Tang and
                  Matthieu Geist and
                  Thomas Mesnard and
                  C{\^{o}}me Fiegel and
                  Andrea Michi and
                  Marco Selvi and
                  Sertan Girgin and
                  Nikola Momchev and
                  Olivier Bachem and
                  Daniel J. Mankowitz and
                  Doina Precup and
                  Bilal Piot},
  title        = {Nash Learning from Human Feedback},
  booktitle    = {Forty-first International Conference on Machine Learning, {ICML}},
  series       = {Proceedings of Machine Learning Research},
  volume       = {235},
  year         = {2024},
  bibsource    = {dblp computer science bibliography, https://dblp.org}
}

@inproceedings{DBLP:conf/nips/Ouyang0JAWMZASR22,
  author       = {Long Ouyang and
                  Jeffrey Wu and
                  Xu Jiang and
                  Diogo Almeida and
                  Carroll L. Wainwright and
                  Pamela Mishkin and
                  Chong Zhang and
                  Sandhini Agarwal and
                  Katarina Slama and
                  Alex Ray and
                  John Schulman and
                  Jacob Hilton and
                  Fraser Kelton and
                  Luke Miller and
                  Maddie Simens and
                  Amanda Askell and
                  Peter Welinder and
                  Paul F. Christiano and
                  Jan Leike and
                  Ryan Lowe},
  title        = {Training language models to follow instructions with human feedback},
  booktitle    = {Advances in Neural Information Processing Systems 35: Annual Conference
                  on Neural Information Processing Systems},
  year         = {2022},
  bibsource    = {dblp computer science bibliography, https://dblp.org}
}

@inproceedings{DBLP:conf/iclr/PanBS22,
  author       = {Alexander Pan and
                  Kush Bhatia and
                  Jacob Steinhardt},
  title        = {The Effects of Reward Misspecification: Mapping and Mitigating Misaligned
                  Models},
  booktitle    = {The Tenth International Conference on Learning Representations, {ICLR}
            },
  year         = {2022},
  bibsource    = {dblp computer science bibliography, https://dblp.org}
}

@inproceedings{
pasztor2026stackelberg,
title={Stackelberg Learning from Human Feedback: Preference Optimization as a Sequential Game},
author={Barna P{\'a}sztor and Thomas Kleine Buening and Andreas Krause},
booktitle={The Fourteenth International Conference on Learning Representations},
year={2026},
}

@inproceedings{DBLP:conf/nips/RafailovSMMEF23,
  author       = {Rafael Rafailov and
                  Archit Sharma and
                  Eric Mitchell and
                  Christopher D. Manning and
                  Stefano Ermon and
                  Chelsea Finn},
  title        = {Direct Preference Optimization: Your Language Model is Secretly a
                  Reward Model},
  booktitle    = {Advances in Neural Information Processing Systems 36: Annual Conference
                  on Neural Information Processing Systems},
  year         = {2023},
  bibsource    = {dblp computer science bibliography, https://dblp.org}
}

@book{russell2019human,
  author    = {Russell, Stuart},
  title     = {Human Compatible: Artificial Intelligence and the Problem of Control},
  publisher = {Viking},
  year      = {2019},
  isbn      = {9780525558613}
}

@inproceedings{DBLP:conf/colt/SeungOS92,
  author       = {H. Sebastian Seung and
                  Manfred Opper and
                  Haim Sompolinsky},
  title        = {Query by Committee},
  booktitle    = {Proceedings of the Fifth Annual {ACM} Conference on Computational
                  Learning Theory, {COLT}},
  pages        = {287--294},
  publisher    = {{ACM}},
  year         = {1992},
  bibsource    = {dblp computer science bibliography, https://dblp.org}
}

@article{simaan1973stackelberg,
  author  = {Marwan Simaan and Jose B. Cruz},
  title   = {On the {S}tackelberg Strategy in Nonzero-Sum Games},
  journal = {Journal of Optimization Theory and Applications},
  volume  = {11},
  number  = {5},
  pages   = {533--555},
  year    = {1973}
}

@article{sion1958general,
  author  = {Sion, Maurice},
  title   = {On general minimax theorems},
  journal = {Pacific Journal of Mathematics},
  volume  = {8},
  number  = {1},
  pages   = {171--176},
  year    = {1958},
}

@inproceedings{DBLP:conf/nips/SkalseHKK22,
  author       = {Joar Skalse and
                  Nikolaus H. R. Howe and
                  Dmitrii Krasheninnikov and
                  David Krueger},
  title        = {Defining and Characterizing Reward Gaming},
  booktitle    = {Advances in Neural Information Processing Systems 35: Annual Conference
                  on Neural Information Processing Systems},
  year         = {2022},
  bibsource    = {dblp computer science bibliography, https://dblp.org}
}

@inproceedings{DBLP:conf/icml/SwamyDK0A24,
  author       = {Gokul Swamy and
                  Christoph Dann and
                  Rahul Kidambi and
                  Steven Wu and
                  Alekh Agarwal},
  title        = {A Minimaximalist Approach to Reinforcement Learning from Human Feedback},
  booktitle    = {Forty-first International Conference on Machine Learning, {ICML}},
  series       = {Proceedings of Machine Learning Research},
  volume       = {235},
  year         = {2024},
  bibsource    = {dblp computer science bibliography, https://dblp.org}
}

@article{tversky1969intransitivity,
  title={Intransitivity of preferences.},
  author={Tversky, Amos},
  journal={Psychological review},
  volume={76},
  number={1},
  pages={31--48},
  year={1969},
  publisher={American Psychological Association}
}

@book{vonstackelberg1934marktform,
  author    = {Heinrich von Stackelberg},
  title     = {Marktform und Gleichgewicht},
  publisher = {Julius Springer},
  address   = {Wien and Berlin},
  year      = {1934},
  note      = {English translation: \emph{Market Structure and Equilibrium}, trans. D.~Bazin, L.~Urch and R.~Hill, Springer, 2011}
}

@article{wiener1960some,
  title={Some moral and technical consequences of automation: As machines learn they may develop unforeseen strategies at rates that baffle their programmers.},
  author={Wiener, Norbert},
  journal={Science},
  volume={131},
  number={3410},
  pages={1355--1358},
  year={1960},
  publisher={American Association for the Advancement of Science}
}

@inproceedings{DBLP:conf/iclr/WuSYJYG25,
  author       = {Yue Wu and
                  Zhiqing Sun and
                  Huizhuo Yuan and
                  Kaixuan Ji and
                  Yiming Yang and
                  Quanquan Gu},
  title        = {Self-Play Preference Optimization for Language Model Alignment},
  booktitle    = {The Thirteenth International Conference on Learning Representations,
                  {ICLR}},
  year         = {2025},
  bibsource    = {dblp computer science bibliography, https://dblp.org}
}

@inproceedings{DBLP:conf/icml/XuRRK25,
  author       = {Yi Xu and
                  Laura Ruis and
                  Tim Rockt{\"{a}}schel and
                  Robert Kirk},
  title        = {Investigating Non-Transitivity in {LLM}-as-a-Judge},
  booktitle    = {Forty-second International Conference on Machine Learning, {ICML}
            },
  series       = {Proceedings of Machine Learning Research},
  volume       = {267},
  year         = {2025},
  bibsource    = {dblp computer science bibliography, https://dblp.org}
}

@inproceedings{DBLP:conf/icml/ZhangZWXG25,
  author       = {Yifan Zhang and
                  Ge Zhang and
                  Yue Wu and
                  Kangping Xu and
                  Quanquan Gu},
  title        = {Beyond {Bradley-Terry} Models: {A} General Preference Model for Language
                  Model Alignment},
  booktitle    = {Forty-second International Conference on Machine Learning, {ICML}
            },
  series       = {Proceedings of Machine Learning Research},
  volume       = {267},
  year         = {2025},
  bibsource    = {dblp computer science bibliography, https://dblp.org}
}

@inproceedings{Lewis1994ASA,
  title={A sequential algorithm for training text classifiers},
  author={David D. Lewis and William A. Gale},
  booktitle={Annual International ACM SIGIR Conference on Research and Development in Information Retrieval},
  year={1994},
}

@article{Lin1991DivergenceMB,
  title={Divergence measures based on the {S}hannon entropy},
  author={Jianhua Lin},
  journal={IEEE Trans. Inf. Theory},
  year={1991},
  volume={37},
  pages={145-151},
}

@article{zhou2023instruction,
  title={Instruction-following evaluation for large language models},
  author={Zhou, Jeffrey and Lu, Tianjian and Mishra, Swaroop and Brahma, Siddhartha and Basu, Sujoy and Luan, Yi and Zhou, Denny and Hou, Le},
  journal={arXiv preprint arXiv:2311.07911},
  year={2023}
}
\bibliographystyle{iclr2027_conference}

\appendix

\section{Extended Literature Review}\label{app:lit-rev}

\textbf{Alignment.} Aligning language models with human preferences requires eliciting those preferences and translating them into a learning signal. Reinforcement learning from human feedback (RLHF) learns scalar rewards from human comparisons and optimizes policies against them \citep{DBLP:conf/nips/ChristianoLBMLA17, DBLP:conf/nips/Ouyang0JAWMZASR22}. A scalar reward induces a transitive ordering, whereas pairwise whereas aggregated human judgments and LLM judges are routinely intransitive \citep{may1954intransitivity, tversky1969intransitivity, DBLP:conf/icml/XuRRK25}. General pairwise preference models accommodate such intransitivities \citep{DBLP:conf/icml/ZhangZWXG25}. Nash learning from human feedback (NLHF) \citep{DBLP:conf/icml/MunosVCARGTGMFM24} and its descendants \citep{DBLP:conf/icml/SwamyDK0A24,DBLP:conf/iclr/WuSYJYG25} seek to compute a Nash equilibrium of simultaneous play by optimizing regularized preference win rates under a fixed preference model.

\textbf{Reward Hacking.} A learned evaluator is only an imperfect proxy for human preferences and optimizing against it can increase its scores without producing outcomes that humans prefer \citep{amodei2016concrete,DBLP:conf/nips/SkalseHKK22}. Stronger optimization can exacerbate this mismatch across control tasks and language models, increasing proxy reward while reducing true (synthetic) reward \citep{DBLP:conf/iclr/PanBS22,DBLP:conf/icml/GaoSH23}; general preference models remain vulnerable \citep{gupta2025pessimism}.

\textbf{Active Learning.} Active preference learning selects comparisons for annotation using uncertainty scores, such as predictive entropy \citep{DBLP:conf/icml/MuldrewHZB24}, or Thompson sampling over fixed response pools \citep{melikidze2026activeultrafeedback}. RUNE \citep{DBLP:conf/iclr/LiangSLA22} instead adds ensemble standard deviation to the agent's reward, optimizing a policy toward disagreement. Adv-RM \citep{bukharin2025adversarial} similarly trains a generator to maximize disagreement between two reward models while keeping the target model's reward high. We optimize disagreement directly, training two opposing policies against an ensemble without this directional constraint. By generating comparisons as the policies evolve, our approach also searches for failure modes that fixed response pools may miss.

\textbf{Stackelberg Learning.} Prior Stackelberg alignment methods optimize rewards while anticipating policy adaptation \citep{DBLP:conf/iclr/ChakrabortyBKWM24}, adapt a preference model to a leading policy \citep{makar-limanov2024starlhf}, or optimize against worst-case preference-gap distributions \citep{DBLP:conf/nips/XuZJJ25}. Closest to our formulation, Stackelberg learning from human feedback (SLHF) \citep{pasztor2026stackelberg} trains a Follower to generate a preferred alternative conditional on the Leader's realized response. 

\section{Existence and Uniqueness of Stackelberg Equilibria}\label{app-th}

This Appendix contains the proof(s) of the existence and uniqueness Theorem~\ref{th:existence-uniqueness-main-text}. We do so in steps. First, we establish equilibrium existence for continuous convex-concave objectives. We then recall a variational identity for KL-regularized optimization and use it to derive the equilibrium policies in closed form and prove uniqueness when both regularization parameters are strictly positive. 

Begin by observing that as $\mathcal{X}$ and $\mathcal{Y}$ are finite, the strategy sets $\Pi$ and $\Omega$ are nonempty convex compact subsets of finite-dimensional Euclidean spaces. For the reader convenience, we repeat the definition of Stackelberg Equilibrium in a slightly more general form. We also note that the specific form of \eqref{eq:regularized-eq} is, in fact, not needed to prove existence. 

\begin{definition}
    Let $f:\Pi \times \Omega \to \mathbb{R}$ be a function. A pair $(\pi^*, \omega^*)$ is a $f$-Stackelberg Equilibrium for the zero-sum game with utility $f$ if
    \begin{equation*}
         \pi^* \in \argmin_{\pi \in \Pi} \, \max_{\omega \in \Omega} \, f(\pi, \omega), \quad \omega^* \in \argmax_{\omega \in \Omega}  f(\pi^*, \omega).
    \end{equation*}
\end{definition}

\begin{theorem}\label{th:existence}
 Let $f:\Pi \times \Omega \to \mathbb{R}$ be continuous convex-concave\footnote{Such a function $f$ is said to be convex-concave if for all $\omega \in \Omega$ the section $f(\cdot, \omega)$ is convex and similarly for all $\pi \in \Pi$ the section $f(\pi, \cdot)$ is concave.}. Then there exists $(\pi^*, \omega^*)$ such that
 \begin{equation*}
     f(\pi^*, \omega) \le f(\pi^*, \omega^*) \le f(\pi, \omega^*), \quad \forall \pi \in \Pi, \omega \in \Omega.
 \end{equation*}
 In particular, $(\pi^*, \omega^*)$ is a Stackelberg equilibrium and 
\begin{equation*}
    \max_\omega \min_\pi f(\pi, \omega)= \min_\pi \max_\omega f(\pi, \omega).
\end{equation*}
\end{theorem}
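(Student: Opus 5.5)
The plan is to derive the saddle point from Sion's minimax theorem \citep{sion1958general} combined with compactness, and then to check by hand that a saddle point is a Stackelberg equilibrium. Since $\mathcal{X}$ and $\mathcal{Y}$ are finite, $\Pi$ and $\Omega$ are products of finitely many simplices. Hence they are nonempty, convex and compact subsets of Euclidean space, as noted above. The function $f$ is continuous and convex-concave, so Sion's theorem applies directly and gives
\[
\min_{\pi\in\Pi}\max_{\omega\in\Omega} f(\pi,\omega)=\max_{\omega\in\Omega}\min_{\pi\in\Pi} f(\pi,\omega),
\]
with all the min and max operations attained.

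To see that they are attained, I will use that $f$ is jointly continuous on a compact set. Berge's maximum theorem (or the uniform continuity of $f$ on $\Pi\times\Omega$) then shows that $g(\pi):=\max_{\omega} f(\pi,\omega)$ and $h(\omega):=\min_\pi f(\pi,\omega)$ are continuous. The extreme value theorem therefore provides $\pi^*\in\argmin_\pi g(\pi)$ and $\bar\omega\in\argmax_\omega h(\omega)$. Denote the common value by $v$. I then obtain the chain
\[
f(\pi^*,\bar\omega)\le g(\pi^*)=v=h(\bar\omega)\le f(\pi^*,\bar\omega),
\]
so every inequality in it is an equality. Set $\omega^*:=\bar\omega$. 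For every $\omega$ we have $f(\pi^*,\omega)\le g(\pi^*)=f(\pi^*,\omega^*)$, and for every $\pi$ we have $f(\pi,\omega^*)\ge h(\omega^*)=f(\pi^*,\omega^*)$. These two facts are exactly the saddle inequalities in the statement.

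It then remains to read off the Stackelberg property. The left saddle inequality says that $\omega^*\in\argmax_\omega f(\pi^*,\omega)$. The point $\pi^*$ was chosen as a minimizer of $g=\max_\omega f(\cdot,\omega)$. Together these are precisely the two conditions in the definition of an $f$-Stackelberg equilibrium. The equality of the min-max and max-min values is the Sion identity established in the first step.

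The main obstacle is the careful justification that Sion's hypotheses hold and that the relevant extrema are attained. Sion only requires upper/lower semicontinuity and quasi-convexity/quasi-concavity, so the hypotheses of this theorem are stronger than needed. Attainment follows from the continuity of $g$ and $h$, which I will prove by a short uniform-continuity argument on the compact product space. As a fallback, if I want to avoid citing Sion, I would apply von Neumann's theorem via Kakutani's fixed point theorem to the best-response correspondence $(\pi,\omega)\mapsto \argmin_{\pi'} f(\pi',\omega)\times\argmax_{\omega'}f(\pi,\omega')$. This correspondence has nonempty convex values by convexity-concavity and a closed graph by continuity, and a fixed point of it is a saddle point.
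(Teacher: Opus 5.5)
Your proposal is correct and follows essentially the same route as the paper: Sion's theorem for equality of the values, Berge's maximum theorem plus compactness to attain $\min_\pi \max_\omega f$ and $\max_\omega \min_\pi f$, then the sandwich forcing $f(\pi^*,\omega^*)=v$, from which the saddle inequalities and the Stackelberg conditions are read off. The Kakutani fallback is a valid alternative but is not needed.
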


\begin{proof}
    Define the lower and upper value of the zero-sum game by
    \begin{equation*}
        \underline{v} \coloneq \max_\omega \min_\pi f(\pi, \omega), \quad \overline{v} \coloneq \min_\pi \max_\omega f(\pi, \omega).
    \end{equation*}
    The immediate inequality gives $\underline{v}   \le \overline{v}$ and by the VnM-Sion maxmin Theorem~\cite{sion1958general} we have equality $\underline{v} = \overline{v}$; denote the common value by $v$. By compactness of $\Pi$ and $\Omega$ and by continuity of the maps $\omega \mapsto \min_\pi f(\pi, \omega)$ and $\pi \mapsto \max_\omega f(\pi, \omega)$, the latter which follows by an application of Berge's Maximum Theorem (see Theorem~17.31 of \cite{aliprantis2006infinite}), we can choose $\pi^* \in \Pi$ and $\omega^* \in \Omega$ such that
     \begin{equation*}
        \min_\pi f(\pi, \omega^*) = v, \quad \max_\omega f(\pi^*, \omega) = v.
    \end{equation*}
    In particular, by the first condition we have that $f(\pi^*, \omega^*) \ge v$ and symmetrically from the second we have $f(\pi^*, \omega^*) \le v$, hence equality. 
    Combining the previous two equalities gives the saddle property
\begin{equation*}
     f(\pi^*, \omega) \le f(\pi^*, \omega^*) \le f(\pi, \omega^*), \quad \forall \pi \in \Pi, \omega \in \Omega.
 \end{equation*}
  By the left-hand inequality, $\omega^* \in \argmax_{\omega \in \Omega}  f(\pi^*, \omega)$. Finally, 
 \begin{equation*}
    \max_\omega f(\pi^*, \omega) =f(\pi^*, \omega^*) = v = \min_\pi \max_\omega f(\pi, \omega),
 \end{equation*}
 which gives $\pi^* \in \argmin_{\pi \in \Pi} \, \max_{\omega \in \Omega} \, f(\pi, \omega)$ as required.
\end{proof}

Note that the function $f:\Pi \times \Omega \to \mathbb{R}$ as defined in Equation~\eqref{eq:regularized-eq} is continuous and convex-concave, hence satisfies the assumptions of Theorem~\ref{th:existence}; this proves the first part of Theorem~\ref{th:existence-uniqueness-main-text}. We next exploit the explicit structure of \eqref{eq:regularized-eq} to prove uniqueness and to obtain an explicit expression for the equilibrium policies. The following Lemma is a well known folklore result, of which, for completeness sake, we report a simple proof. For similar arguments, see also \cite{DBLP:conf/nips/RafailovSMMEF23}.

\begin{lemma}\label{lem:variational-minimization}
    Let $A$ be a finite set, let $r \in \Delta(A)$ be of full support, let $\tau \in \mathbb{R}_{>0}$ and $h:A \to \mathbb{R}$. Then 
    \begin{equation}
        \min_{q \in \Delta(A)} \{ \langle q, h \rangle + \tau \KL (q \,\|\, r) \} = -\tau \log Z,
    \end{equation}
    where $\langle\cdot, \cdot \rangle$ denotes the expectation and $Z$ is the normalizing constant
    \begin{equation}
        Z \coloneq \sum_{a\in A}r(a)e^{-h(a)/\tau}.
    \end{equation}
    Moreover, the minimizer is unique and is given by
    \begin{equation}\label{eq:appA:minimizer}
        q^*(a)=\frac{r(a)e^{-h(a)/\tau}}{Z}.
    \end{equation}
\end{lemma}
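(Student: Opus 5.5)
The proof is a direct computation that rewrites the objective as a single KL divergence to the candidate $q^*$ plus a constant. First I check that $Z$ and $q^*$ make sense. Since $r$ has full support and $e^{-h(a)/\tau}>0$ for every $a$, we get $Z>0$, and $q^*$ as defined in \eqref{eq:appA:minimizer} is a probability vector on $A$ with full support. In particular $\log q^*(a)$ is finite everywhere.

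For any $q\in\Delta(A)$ I write $\KL(q\,\|\,r)=\sum_{a} q(a)\log\bigl(q(a)/r(a)\bigr)$, with the convention $0\log 0=0$. This is finite because $r$ has full support. Taking logarithms in the definition of $q^*$ gives
\begin{equation*}
\log r(a) = \log q^*(a) + h(a)/\tau + \log Z .
\end{equation*}
Substituting this into the KL term and using $\sum_a q(a)=1$, I expect the identity
\begin{equation*}
\langle q, h\rangle + \tau\KL(q\,\|\,r) = \tau \KL(q\,\|\,q^*) - \tau\log Z ,
\end{equation*}
valid for every $q\in\Delta(A)$. Coordinates with $q(a)=0$ contribute zero to both sides, so the convention causes no trouble.

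The conclusion then follows from Gibbs' inequality: $\KL(q\,\|\,q^*)\ge 0$, with equality if and only if $q=q^*$. This is Jensen's inequality applied to the strictly convex function $t\mapsto t\log t$, or equivalently to the strictly concave $\log$. Since $\tau>0$, the objective is therefore bounded below by $-\tau\log Z$. That bound is attained exactly at $q^*$ and nowhere else, which gives both the value of the minimum and the uniqueness of the minimizer.

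No step here is a real obstacle. The only point needing care is the bookkeeping for zero entries of $q$, so that the identity holds on the whole simplex, including its boundary, and not only for full-support $q$. The full support of $r$ guarantees that every quantity stays finite.
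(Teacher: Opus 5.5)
Your proof is correct and follows the paper's argument. The paper likewise expands $\KL(q\,\|\,q^*)$ to get the identity $\langle q,h\rangle+\tau\KL(q\,\|\,r)=-\tau\log Z+\tau\KL(q\,\|\,q^*)$, then concludes from nonnegativity of KL, with equality if and only if $q=q^*$; your explicit treatment of zero entries of $q$ is a small refinement the paper leaves implicit.
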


\begin{proof}
Since $r$ has full support, $q^*$ as defined in \eqref{eq:appA:minimizer} also has full support. For any
\(q \in \Delta(A)\),
\begin{align*}
    \KL(q \,\|\, q^*)
    &= \sum_{a \in A} q(a)
       \log\left(\frac{q(a)}{q^*(a)}\right) \\
    &= \KL(q \,\|\, r)
       + \frac{1}{\tau}\langle q,h\rangle
       + \log Z.
\end{align*}
Rearranging gives
\begin{equation}\label{eq:appendix-variational}
    \langle q,h\rangle
    + \tau \KL(q \,\|\, r)
    =
    -\tau\log Z
    + \tau\KL(q \,\|\, q^*).
\end{equation}
Because the Kullback--Leibler divergence is nonnegative, the right-hand side is minimized at $q^*$, with minimum value $-\tau\log Z$. Moreover, $\KL(q \,\|\, q^*)=0$ if and only if $q=q^*$, which proves uniqueness.
\end{proof}

\begin{theorem}\label{th:uniqueness-explicit}
    Suppose that $\mathcal{X}$ and $\mathcal{Y}$ are finite and that $\rho$, $\piref$, and $\omegaref$ have full support. Let $f$ be defined as in \eqref{eq:regularized-eq}. Suppose that $\tau^L, \tau^F> 0$ and $\piref, \omegaref$ are of full support. Then there exists a unique $f$-Stackelberg Equilibrium $(\pi^*, \omega^*)$ given by
\begin{equation*}
\begin{aligned}
\omega^*(y' \mid x,y)
&=
\frac{
    \omegaref(y' \mid x,y)
    \exp\!\left(u(x,y,y')/\tau^F\right)
}{
    Z^F(x,y)
}, \\
\pi^*(y \mid x)
&=
\frac{
    \piref(y \mid x)
    \exp\!\left(
        -\frac{\tau^F}{\tau^L}\log Z^F(x,y)
    \right)
}{
    Z^L(x)
},
\end{aligned}
\end{equation*}
where the normalization functions $Z^F$ and $Z^L$ are defined as
\begin{align*}
        Z^F(x,y)
        &\coloneqq
        \sum_{z\in\mathcal{Y}}
        \omegaref(z\mid x,y)
        \exp\!\bigl(u(x,y,z)/\tau^F\bigr),\\
        Z^L(x)
        &\coloneqq
        \sum_{z\in\mathcal{Y}}
        \piref(z\mid x)
        \exp\!\bigl(-(\tau^F/\tau^L)\log Z^F(x,z)\bigr).
\end{align*}
\end{theorem}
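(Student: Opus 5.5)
We work backward through the order of play, applying Lemma~\ref{lem:variational-minimization} first to the Follower and then to the Leader. Existence is already given by Theorem~\ref{th:existence}, so the task is to pin down every equilibrium explicitly.

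\textbf{Step 1: the inner problem decouples.} Fix any $\pi\in\Pi$ and expand $f(\pi,\omega)$ as
\begin{equation*}
\sum_x \rho(x)\sum_y \pi(y\mid x)\Bigl[\langle \omega(\cdot\mid x,y), u(x,y,\cdot)\rangle - \tau^F \KL\bigl(\omega(\cdot\mid x,y)\,\|\,\omegaref(\cdot\mid x,y)\bigr)\Bigr] + \tau^L\KL_\rho(\pi\,\|\,\piref).
\end{equation*}
The maximization over $\omega$ separates across pairs $(x,y)$. For each pair, apply Lemma~\ref{lem:variational-minimization} with $h=-u(x,y,\cdot)$, $r=\omegaref(\cdot\mid x,y)$ and $\tau=\tau^F$, after flipping sign so that the max becomes a min. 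This gives the unique maximizer $\omega^*(\cdot\mid x,y)$ of the stated form, with optimal value $\tau^F\log Z^F(x,y)$.

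Two points follow. First, this maximizer does not depend on $\pi$. Second,
\begin{equation*}
\max_\omega f(\pi,\omega)=\mathbb{E}_{x\sim\rho}\Bigl[\langle \pi(\cdot\mid x),\tau^F\log Z^F(x,\cdot)\rangle+\tau^L\KL\bigl(\pi(\cdot\mid x)\,\|\,\piref(\cdot\mid x)\bigr)\Bigr].
\end{equation*}

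\textbf{Step 2: the outer problem.} This objective separates across contexts $x$. Since $\rho$ has full support, $\pi$ minimizes it if and only if each $\pi(\cdot\mid x)$ minimizes its own term. Apply Lemma~\ref{lem:variational-minimization} again with $h=\tau^F\log Z^F(x,\cdot)$, $r=\piref(\cdot\mid x)$ and $\tau=\tau^L$. This yields the unique minimizer
\begin{equation*}
\pi^*(y\mid x)\propto\piref(y\mid x)\exp\bigl(-(\tau^F/\tau^L)\log Z^F(x,y)\bigr),
\end{equation*}
with normalizer $Z^L(x)$. So $\pi^*$ is uniquely determined.

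\textbf{Step 3: uniqueness of the Follower component.} Given $\pi^*$, any $\omega\in\argmax_\omega f(\pi^*,\omega)$ must maximize each $(x,y)$-term on which the weight $\rho(x)\pi^*(y\mid x)$ is positive. This is the step needing care: on zero-weight pairs $\omega$ would be unconstrained, and Stackelberg uniqueness could fail there.

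The obstacle is resolved by full support. Since $\piref$ has full support and $Z^F(x,y)>0$ is finite, $\pi^*(y\mid x)>0$ for every $(x,y)$. Together with $\rho>0$, every weight is strictly positive. Hence the strict uniqueness in Lemma~\ref{lem:variational-minimization} forces $\omega=\omega^*$ at every $(x,y)$.

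Finally, $(\pi^*,\omega^*)$ satisfies \eqref{eq:general-stackelberg-equilibrium}, so it is an equilibrium; by Steps 2 and 3 it is the only one.
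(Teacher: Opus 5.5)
Your proposal is correct and follows essentially the same route as the paper's proof. Both solve the Follower's problem pointwise via Lemma~\ref{lem:variational-minimization}, substitute the value $\tau^F\log Z^F(x,y)$ into the Leader's objective, minimize context by context using full support of $\rho$, and then use full support of $\pi^*$ (inherited from $\piref$) so that every weight $\rho(x)\pi^*(y\mid x)$ is positive, which forces the Follower's maximizer to be unique at every $(x,y)$.
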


\begin{proof}
Fix a Leader's policy $\pi \in \Pi$. The Follower's problem is equivalent to
\begin{equation}\label{eq:appA:proof-uniquenes1}
    \max_{\omega \in \Omega} \mathbb{E}_{x \sim \rho(\cdot), y \sim \pi(\cdot\mid x)}[\Phi_{x,y}(\omega(\cdot\mid x,y)],
\end{equation}
where we define
\begin{equation*}
    \Phi_{x,y}(\nu)\coloneq \mathbb{E}_{y'\sim \nu(\cdot)}[u(x,y,y')]-\tau^F\operatorname{KL}(\nu \,\|\, \omegaref(\cdot\mid x,y)).
\end{equation*}
    We solve the maximization problem \eqref{eq:appA:proof-uniquenes1} pointwise for each $(x,y) \in \mathcal{X} \times \mathcal{Y}$. By Lemma~\ref{lem:variational-minimization}, we have
    \begin{equation*}
        \max_{\nu\in\Delta(\mathcal Y)}
\Phi_{x,y}(\nu)
=
\tau^F\log Z^F(x,y),
    \end{equation*}
with unique maximizer
\begin{equation*}
    \omega^*(y'\mid x,y)
=
\frac{
\omegaref(y'\mid x,y)
\exp\!\left(u(x,y,y')/\tau^F\right)
}{
Z^F(x,y)
},
\end{equation*}
where the normalization function $Z^F$ is defined as
\begin{equation*}
    Z^F(x,y)
\coloneq
\sum_{y'\in\mathcal Y}
\omegaref(y'\mid x,y)
\exp\!\left(\frac{u(x,y,y')}{\tau^F}\right).
\end{equation*}
It follows that $\omega^*$ maximizes $f(\pi, \cdot)$ for every $\pi$. For a fixed $\pi$, this maximizer is unique at every point $(x,y)$ such that $\rho(x)\pi(y \mid x)>0$. Substituting the Follower’s optimal value into the Leader’s problem gives
\begin{equation}\label{eq:appA:proof-uniquenes2}
\begin{aligned}
V(\pi)
&\coloneq
\max_{\omega\in\Omega}f(\pi,\omega)\\
&\coloneq
\sum_{x\in\mathcal X}\rho(x)
\left[
\sum_{y\in\mathcal Y}
\pi(y \mid x)\tau^F\log Z^F(x,y)
+
\tau^L
\mathrm{KL}\!\left(
\pi(\cdot \mid x)
\,\|\,
\piref(\cdot \mid x)
\right)
\right].
\end{aligned}
\end{equation}
The minimization of $V$ separates across contexts. For each $x \in \mathcal{X}$, apply Lemma~\ref{lem:variational-minimization} again, now with
\begin{equation*}
    Z^L(x) \coloneq
\sum_{y\in\mathcal Y}
\piref(y \mid x)
\exp\!\left(
-\frac{\tau^F}{\tau^L}\log Z^F(x,y)
\right).
\end{equation*}
Lemma~\ref{lem:variational-minimization} shows that the minimum of the $x$-indexed term in \eqref{eq:appA:proof-uniquenes2} is $-\tau^L\log Z^L(x)$, and that its unique minimizer is
\begin{equation*}
    \pi^*(y \mid x)
=
\frac{
\piref(y \mid x)
\exp\!\left(
-\frac{\tau^F}{\tau^L}\log Z^F(x,y)
\right)
}{
Z^L(x)
}.
\end{equation*}
Since $\rho$ has full support, every context-indexed term in \eqref{eq:appA:proof-uniquenes2} has strictly positive weight. Hence $\pi^*$ is the unique minimizer of $V$, and, in particular, 
\begin{equation*}
\pi^* \in \argmin_{\pi \in \Pi} \, \max_{\omega \in \Omega} \, f(\pi, \omega).
\end{equation*}
Moreover, since $\piref$ has full support, so does $\pi^*$. Thus, for every $(x,y)$, we have $\rho(x)\pi^*(y\mid x)>0$. By the preceding argument, this implies that $\omega^*$ is the unique maximizer of $f(\pi^*,\cdot)$, completing the proof.
\end{proof}

\section{Convergence of Stackelberg Mirror Descent--Ascent}\label{app:tabular-algo}

This Appendix proves a stronger version of Theorem~\ref{thm:smida-main}, on the last-iterate convergence of \textbf{\textsc{smida}}. We establish explicit bounds in both directions of the KL divergence for each player, uniformly over contexts and, for the Follower, conditioning actions.

\subsection{Uniform Last Iterate Convergence}

For a function $v:A \to \mathbb{R}$, write $\operatorname{osc}(v):=\max_a v(a)-\min_a v(a)$ for the oscillation of $v$. Define the two payoff oscillations
\begin{align*}
  D_F
  &\coloneq\max_{x,y}\operatorname{osc}\left(
    u(x,y, \cdot)
  \right)= \max_{x,y}\left[
    \max_{y'}u(x,y,y')-\min_{y'}u(x,y,y')
  \right],\\
  D
  &\coloneq
  \max_{x}\operatorname{osc}\left(
    u(x, \cdot, \cdot)
  \right)= \max_x\left[
    \max_{y,y'}u(x,y,y')-\min_{y,y'}u(x,y,y')
  \right],
\end{align*}
The first is the variation relevant to the Follower's tracking error; the second also allows the Leader action to vary, and satisfies $D_F\leq D$.

\begin{theorem}\label{thm:smida-last-iterate}
  Suppose that $\mathcal X$ and $\mathcal Y$ are finite, that $\rho$, $\piref$, and $\omegaref$ have full support, and that $\tau^L,\tau^F>0$.  Run Algorithm~\ref{alg:smida} under condition~\eqref{eq:step-sizes}, and let $(\pi^*,\omega^*)$ be the unique regularized Stackelberg equilibrium.  Set
  \begin{equation*}
    q_L:=1-\eta^L\tau^L,
    \qquad
    q_F:=1-\eta^F\tau^F,
    \qquad
    \kappa:=\frac{\eta^F\tau^F}{\eta^L\tau^L}>1,
    \label{eq:smida-rates}
  \end{equation*}
  and define
  \begin{equation*}
    r(x,y):=\tau^F\log\!\sum_{y'}
    \omegaref(y'\mid x,y)
    \exp\!\left(\frac{u(x,y,y')}{\tau^F}\right),
    \qquad
    D_L:=\max_x\operatorname{osc}r(x,\cdot).
    \label{eq:smida-exact-cost}
  \end{equation*}
  Then all iterates have full support and, for every $t\geq0$,
  \begin{align}
    &\max_{x,y}\max\!\left\{
      \KL\!\left(\omega^*(\cdot\mid x,y)\|\omega_t(\cdot\mid x,y)\right),
      \KL\!\left(\omega_t(\cdot\mid x,y)\|\omega^*(\cdot\mid x,y)\right)
    \right\}
    \leq q_F^t\frac{D_F}{\tau^F},
    \label{eq:smida-follower-bound}\\
    &\max_x\max\!\left\{
      \KL\!\left(\pi^*(\cdot\mid x)\|\pi_t(\cdot\mid x)\right),
      \KL\!\left(\pi_t(\cdot\mid x)\|\pi^*(\cdot\mid x)\right)
    \right\}
    \nonumber\\[-2pt]
    &\hspace{35mm}\leq
      q_L^t\frac{D_L}{\tau^L}
      +\eta^L D_F
      \frac{q_L^t-q_F^t}{q_L-q_F}.
    \label{eq:smida-leader-exact-bound}
  \end{align}
  Moreover, $D_L\leq D$, and hence the simpler bound
  \begin{equation}
    \max_x\max\!\left\{
      \KL(\pi^*(\cdot\mid x)\|\pi_t(\cdot\mid x)),
      \KL(\pi_t(\cdot\mid x)\|\pi^*(\cdot\mid x))
    \right\}
    \leq
    q_L^t\frac{D}{\tau^L}\frac{\kappa}{\kappa-1}
    \label{eq:smida-leader-simple-bound}
  \end{equation}
  also holds.  Thus the Follower converges at rate $q_F^t$ and the Leader at rate $q_L^t$, simultaneously in both directions of KL and uniformly over all contexts. 
\end{theorem}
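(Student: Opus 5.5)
The plan is to work entirely in log space, where both \textsc{smida} updates are affine recursions. I will measure the error through the oscillation of the log-ratio between the iterate and the equilibrium. The key elementary fact is the following. If $p,q\in\Delta(A)$ have full support and $\delta=\log p-\log q$, then $\min_a\delta(a)\le 0\le\max_a\delta(a)$, since both distributions sum to one. Hence
\[
\KL(p\|q)=\EE_p[\delta]\le\max_a\delta(a)\le\operatorname{osc}(\delta),
\]
and symmetrically $\KL(q\|p)\le\operatorname{osc}(\delta)$. So it suffices to bound $\operatorname{osc}$ of the log-ratio, and additive normalisation constants never matter. Full support of all iterates follows by induction: each update multiplies positive quantities and $\piref,\omegaref$ have full support.

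\textbf{Follower.} Write $a=\eta^F\tau^F$. The update reads
\[
\log\omega_{t+1}=(1-a)\log\omega_t+a\log\omegaref+\eta^F u+c_t(x,y).
\]
By Theorem~\ref{th:uniqueness-explicit}, $\log\omega^*=\log\omegaref+u/\tau^F-\log Z^F$. Substituting shows that $\omega^*$ is a fixed point up to a constant. Hence $\delta_t:=\log\omega_t-\log\omega^*$ satisfies $\delta_{t+1}=q_F\delta_t+\text{const}$. Since $q_F\ge0$ (this uses $\eta^F\tau^F\le1$), we get $\operatorname{osc}(\delta_{t+1})=q_F\operatorname{osc}(\delta_t)$. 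At initialisation $\omega_0=\omegaref$, so $\delta_0=-u(x,y,\cdot)/\tau^F+\text{const}$, with oscillation at most $D_F/\tau^F$. The elementary fact then gives \eqref{eq:smida-follower-bound}, uniformly in $(x,y)$.

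\textbf{Leader.} By Theorem~\ref{th:uniqueness-explicit}, $\log\pi^*=\log\piref-r/\tau^L+\text{const}$. The same substitution gives, for $\varepsilon_t:=\log\pi_t-\log\pi^*$,
\[
\varepsilon_{t+1}=q_L\varepsilon_t-\eta^L\bigl(\widehat r_t-r\bigr)+\text{const}.
\]
The coupling term is controlled by identity \eqref{eq:appendix-variational} from Lemma~\ref{lem:variational-minimization}, applied with $h=-u$ and reference $\omegaref$:
\[
r(x,y)-\widehat r_t(x,y)=\tau^F\KL\bigl(\omega_t(\cdot\mid x,y)\|\omega^*(\cdot\mid x,y)\bigr).
\]
By the Follower bound this lies in $[0,q_F^tD_F]$, so $\operatorname{osc}_y(\widehat r_t-r)\le q_F^tD_F$.

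At initialisation, $\varepsilon_0=r/\tau^L+\text{const}$, with oscillation at most $D_L/\tau^L$. Unrolling the recursion with subadditivity of $\operatorname{osc}$ gives
\[
\operatorname{osc}(\varepsilon_t)\le q_L^t\frac{D_L}{\tau^L}+\eta^LD_F\sum_{s<t}q_L^{t-1-s}q_F^s .
\]
The geometric sum equals $(q_L^t-q_F^t)/(q_L-q_F)$, using $q_L>q_F$. Together with the elementary fact, this is \eqref{eq:smida-leader-exact-bound}.

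\textbf{Simplified bound.} First, $r(x,y)$ is a soft maximum under $\omegaref$, so it lies between $\min_{y'}u(x,y,y')$ and $\max_{y'}u(x,y,y')$. Hence $\operatorname{osc}_y r(x,\cdot)\le D$, i.e.\ $D_L\le D$. Second, $(q_L^t-q_F^t)/(q_L-q_F)\le q_L^t/(q_L-q_F)$ and $q_L-q_F=\eta^L\tau^L(\kappa-1)$. The second term is therefore at most $q_L^tD_F/(\tau^L(\kappa-1))$. Using $D_F\le D$ and adding the two terms gives $q_L^t\frac{D}{\tau^L}\frac{\kappa}{\kappa-1}$, which is \eqref{eq:smida-leader-simple-bound}.

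\textbf{Main obstacle.} The step needing care is the coupling between the two players. One must see that the Leader's cost error is exactly a KL divergence via the variational identity, so that it has a sign and an oscillation bound inherited from the Follower's rate. One must also check that all normalisation constants in both recursions are genuinely constant in the action being normalised over; this is what lets the oscillation argument go through.
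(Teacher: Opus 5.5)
Your proposal is correct and follows essentially the same route as the paper. The shared steps are: bounding both KL directions by the oscillation of the log-ratio; the exact contraction $\operatorname{osc}(\log\omega_t-\log\omega^*)=q_F^t\operatorname{osc}(\log\omega_0-\log\omega^*)$; the identity $r-\widehat r_t=\tau^F\KL(\omega_t\|\omega^*)\ge 0$, which feeds the Follower rate into the Leader's perturbed log-recursion; and the unrolled geometric sum. You are slightly more explicit than the paper about needing $q_F,q_L\ge 0$ and about the final arithmetic that combines $D_L\le D$, $D_F\le D$ and $q_L-q_F=\eta^L\tau^L(\kappa-1)$.
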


\begin{proof}
    Observe that, by their definition in Algorithm~\ref{alg:smida}, all iterates have full support. We make use of the following inequality:
    \begin{equation}\label{eq:app:th-sda1}
    \max\{\mathrm{KL}(p\|q),\mathrm{KL}(q\|p)\}
    \leq
    \operatorname{osc}\!\left(\log p-\log q\right)
  \end{equation}
  for full-support distributions $p,q$. Indeed, if $v:=\log p-\log q$, normalization implies $\min v\leq0\leq\max v$. Therefore $\mathrm{KL}(p\|q)=\mathbb E_p[v]\leq\max v\leq \operatorname{osc}(v)$, and the reverse direction follows identically.

    For the Follower, let $v_t^{x,y}:=\log\omega_t(\cdot\mid x,y)-
  \log\omega^\ast(\cdot\mid x,y)$, where $\omega^*$ is the unique equilibrium policy, as per Theorem~\ref{th:uniqueness-explicit}.  Taking logarithms in the Follower update gives
  \begin{equation*}
    v_{t+1}^{x,y}=q_Fv_t^{x,y}+c_t^{x,y}\mathbf 1
  \end{equation*}
  for a scalar normalizing constant $c_t^{x,y}$ and $\mathbf 1$ the vector of ones.  Since oscillation is
  invariant under addition of constants,
  \begin{equation}\label{eq:smida-follower-oscillation-rate}
    \operatorname{osc}(v_t^{x,y})
    =q_F^t\operatorname{osc}(v_0^{x,y})
    =q_F^t\frac{\operatorname{osc}u(x,y,\cdot)}{\tau^F}
    \leq q_F^t\frac{D_F}{\tau^F}.
  \end{equation}
  Combining this with~\eqref{eq:app:th-sda1} proves
  \eqref{eq:smida-follower-bound}.

From the explicit form of $\omega^*$, we have
$\log(\omega^\ast/\omega^{\mathrm{ref}})=u/\tau^F-\log Z^F$. Hence, 
\[
\tau^F\mathrm{KL}(\omega_t\|\omega^*)
=\tau^F\mathrm{KL}(\omega_t\|\omegaref)
-\mathbb E_{\omega_t}[u]+\tau^F\log Z^F.
\]
Using $r=\tau^F\log Z^F$ and $\widehat r_t=\mathbb E_{\omega_t}[u]-\tau^F\mathrm{KL}(\omega_t\|\omega^{\mathrm{ref}})$ gives
$r-\widehat r_t=\tau^F\mathrm{KL}(\omega_t\|\omega^\ast)$. Equations~\eqref{eq:app:th-sda1} and \eqref{eq:smida-follower-oscillation-rate} imply, uniformly in $(x,y)$,
  \begin{equation*}\label{eq:smida-cost-error-rate}
    0\leq r(x,y)- \widehat{r}_t(x,y) \leq D_Fq_F^t.
  \end{equation*}
  For the Leader, put $m_t^x:=\log\pi_t(\cdot\mid x)-\log\pi^\ast(\cdot\mid x)$. Taking logarithms in the Leader update gives
  \begin{equation*}
    m_{t+1}^x
    =q_Lm_t^x+\eta^L(r(x,\cdot)- \widehat{r}_t(x,\cdot))+d_t^x\mathbf1
    \label{eq:smida-leader-log-recursion}
  \end{equation*}
  for another scalar normalizer $d_t^x$. Because $r-\widehat{r}_t \geq0$, we have $\operatorname{osc}(r-\widehat{r}_t)(x,\cdot)\leq\max_y(r-\widehat{r}_t)(x,y)$. Hence
  \begin{equation*}
    \max_x\operatorname{osc}(m_{t+1}^x)
    \leq
    q_L\max_x\operatorname{osc}(m_t^x)
    +\eta^LD_Fq_F^t.
    \label{eq:smida-leader-oscillation-recursion}
  \end{equation*}
  Since $\pi_0=\pi^{\mathrm{ref}}$,
  \begin{equation*}
    \max_x\operatorname{osc}(m_0^x)=\frac{D_L}{\tau^L}.
  \end{equation*}
    Expanding Equation~\eqref{eq:smida-leader-oscillation-recursion} therefore gives
  \begin{equation*}
    \max_x\operatorname{osc}(m_t^x)
    \leq
    q_L^t\frac{D_L}{\tau^L}
    +\eta^LD_F
    \sum_{k=0}^{t-1}q_L^{t-1-k}q_F^k,
  \end{equation*}
  and the finite sum equals  $(q_L^t-q_F^t)/(q_L-q_F)$. Applying \eqref{eq:app:th-sda1} proves \eqref{eq:smida-leader-exact-bound}. Finally, $r(x,y)$ lies between $\min_{y'}u(x,y,y')$ and $\max_{y'}u(x,y,y')$, so $D_L\leq D$. Since $q_F<q_L$,
  \begin{equation*}
    \sum_{k=0}^{t-1}q_L^{t-1-k}q_F^k
    \leq\frac{q_L^t}{\eta^F\tau^F-\eta^L\tau^L},
  \end{equation*}
  which gives \eqref{eq:smida-leader-simple-bound}.
\end{proof}

Note that Theorem~\ref{thm:smida-main} follows by averaging the uniform bounds of Theorem~\ref{thm:smida-last-iterate}. Indeed, by definition,
\begin{equation*}
    \mathrm{KL}_{\rho}(\pi\|\widetilde\pi)
=
\sum_x \rho(x)\,
\mathrm{KL}\bigl(\pi(\cdot\mid x)\|
\widetilde\pi(\cdot\mid x)\bigr),
\end{equation*}
and
\begin{equation*}
    \mathrm{KL}_{\rho,\pi^*}(\omega\|\widetilde\omega)
=
\sum_{x,y}\rho(x)\pi^*(y\mid x)\,
\mathrm{KL}\bigl(\omega(\cdot\mid x,y)\|
\widetilde\omega(\cdot\mid x,y)\bigr).
\end{equation*}
The weights in each sum are nonnegative and sum to one. Thus, applying the uniform bounds in Theorem~\ref{thm:smida-last-iterate} separately to both directions of the KL divergence gives, for every $t \ge 1$,
\begin{equation*}
    \max\!\left\{
\mathrm{KL}_{\rho}(\pi^*\|\pi_t),
\mathrm{KL}_{\rho}(\pi_t\|\pi^*)
\right\}
\leq
\frac{D}{\tau^L}\frac{\kappa}{\kappa-1}
(1-\eta^L\tau^L)^t,
\end{equation*}
and
\begin{equation*}
    \max\!\left\{
\mathrm{KL}_{\rho,\pi^*}(\omega^*\|\omega_t),
\mathrm{KL}_{\rho,\pi^*}(\omega_t\|\omega^*)
\right\}
\leq
\frac{D_F}{\tau^F}
(1-\eta^F\tau^F)^t.
\end{equation*}
Consequently, the strictly positive constants required in the main theorem may be chosen as
\begin{equation*}
    C_L=\max\!\left\{1,\frac{D}{\tau^L}\frac{\kappa}{\kappa-1}\right\},
\qquad
C_F=\max\!\left\{1,\frac{D_F}{\tau^F}\right\}.
\end{equation*}
These constants are finite because the action and context sets are finite, $\tau^L,\tau^F>0$, and $\kappa>1$. The step-size condition ensures that both geometric factors lie in $[0,1)$, so the bounds imply last-iterate linear convergence; full support of every iterate is already established in Theorem~\ref{thm:smida-main}. 

\subsection{The Follower's Timescale and the Leader's Approximation Error}\label{subsec:appB1}

The choice of a faster Follower in \eqref{eq:step-sizes} follows the approach adopted in SLHF \citep{pasztor2026stackelberg}. The underlying intuition is standard in two-timescale optimization: the inner player approaches its best response while the outer player changes more slowly, allowing the latter's updates to approximate optimization of the maximized objective \citep{DBLP:journals/jmlr/Lin0J25}. In our tabular setting, the Follower's objective at each $(x,y)$ is independent of the Leader's policy. The faster timescale therefore serves to reduce the error in the conditional values $\widehat{r}_t$ used by the Leader.

The role of the Follower's approximation error can be made explicit. Let $r(x,y)\coloneq\tau^F\log Z^F(x,y)$ denote the Follower's optimal regularized value at $(x,y)$. The following lemma identifies the difference between this value and the current conditional value $\widehat{r}_t(x,y)$.

\begin{lemma}
Under the assumptions of Theorem~B.1, for every $t\geq 0$ and every $(x,y)\in\mathcal{X}\times\mathcal{Y}$,
\begin{equation*}
    r(x,y)-\widehat{r}_t(x,y)
=\tau^F\KL\left(
\omega_t(\cdot\mid x,y)\,\|\,
\omega^*(\cdot\mid x,y)
\right)
\geq 0.
\end{equation*}
\end{lemma}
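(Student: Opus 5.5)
The identity is pointwise in $(x,y)$, so fix $(x,y)$ and abbreviate $\nu:=\omega_t(\cdot\mid x,y)$, $\omega^*:=\omega^*(\cdot\mid x,y)$, $\omegaref:=\omegaref(\cdot\mid x,y)$. The plan is to reuse the variational identity \eqref{eq:appendix-variational} from the proof of Lemma~\ref{lem:variational-minimization}. We apply it with $A=\mathcal{Y}$, $r=\omegaref$, $\tau=\tau^F$ and $h=-u(x,y,\cdot)$. In this setting the minimizer $q^*$ is exactly $\omega^*(\cdot\mid x,y)$ from Theorem~\ref{th:uniqueness-explicit}, and the normalizer is $Z=Z^F(x,y)$. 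The identity therefore reads
\begin{equation*}
-\langle \nu, u(x,y,\cdot)\rangle+\tau^F\KL(\nu\,\|\,\omegaref)=-\tau^F\log Z^F(x,y)+\tau^F\KL(\nu\,\|\,\omega^*).
\end{equation*}
It holds for any $\nu\in\Delta(\mathcal Y)$, so it holds in particular for the current iterate.

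Next we rearrange. By \eqref{eq:smida-current-cost}, the left-hand side equals $-\widehat r_t(x,y)$, and $\tau^F\log Z^F(x,y)=r(x,y)$ by definition. Moving terms across gives $r(x,y)-\widehat r_t(x,y)=\tau^F\KL(\nu\,\|\,\omega^*)$.

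Finally, nonnegativity follows from Gibbs' inequality together with $\tau^F>0$.

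The only point needing care is finiteness of the KL terms. We need $\KL(\nu\,\|\,\omegaref)$ and $\KL(\nu\,\|\,\omega^*)$ to be finite so that no $\infty-\infty$ expression appears. This holds because $\omegaref$ has full support, and hence so does $\omega^*$. As an alternative that avoids invoking the lemma, one can expand $\log(\nu/\omega^*)=\log(\nu/\omegaref)-u/\tau^F+\log Z^F$ directly and take the $\nu$-expectation, which gives the same identity in one line. The proof involves no real obstacle beyond matching the sign convention $h=-u$.
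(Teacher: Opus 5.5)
Your proposal is correct and matches the paper's proof essentially verbatim: the paper also evaluates the identity \eqref{eq:appendix-variational} at $\omega_t(\cdot\mid x,y)$, with $h=-u(x,y,\cdot)$, reference $\omegaref(\cdot\mid x,y)$ and $\tau=\tau^F$, reads off $-\widehat{r}_t(x,y)$ on one side and $-r(x,y)+\tau^F\KL(\omega_t\,\|\,\omega^*)$ on the other, and rearranges. Your alternative direct expansion is the same computation the paper carries out inline in the proof of Theorem~\ref{thm:smida-last-iterate}.
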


\begin{proof}
Fix $(x,y)$. Apply the identity established in the proof of Lemma~\ref{lem:variational-minimization} with $h=-u(x,y,\cdot)$, reference distribution $\omegaref(\cdot\mid x,y)$, and regularization parameter $\tau^F$. The unique minimizer is $\omega^*(\cdot\mid x,y)$ and the minimum is $-r(x,y)$. Evaluating the identity \eqref{eq:appendix-variational} at $\omega_t(\cdot\mid x,y)$ therefore gives
\begin{equation*}
    -\widehat{r}_t(x,y)
=-r(x,y)+\tau^F\KL\left(
\omega_t(\cdot\mid x,y)\,\|\,
\omega^*(\cdot\mid x,y)
\right).
\end{equation*}

Rearranging proves the equality. Nonnegativity follows from that of KL divergence.
\end{proof}

Thus, convergence of the Follower directly controls the error in the costs used by the Leader.

\section{Deep-smida: Policy Gradients and Implementation Details}\label{app:deep}

This Appendix contains algorithmic and implementation details for \textbf{\textsc{deep-smida}} (see Algorithm~\ref{alg:deep-stackelberg}). We parametrize the Leader's and Follower's policies by $\pi_\theta$ and $\omega_\phi$, respectively. Denote by $\rho$ the full support distribution over contexts and by $\piref, \omegaref$ the corresponding reference policies.

Motivated by the objective in Equation~\eqref{eq:regularized-eq} and the procedure described in Algorithm~\ref{alg:smida}, we write the Follower's exact objective and Leader's plug-in objective, respectively, as
\begin{align}
J^F(\theta, \phi) &\coloneq \mathbb{E}_{x \sim \rho(\cdot), y \sim \pi_\theta(\cdot \mid x), y' \sim \omega_\phi(\cdot \mid x,y)}[u(x,y,y')]-\tau^F\KL_{\rho, \pi_\theta}(\omega_\phi \,\|\, \omegaref),
\label{eq:follower-objective}
\\
J^L(\theta)
&\coloneq
\mathbb{E}_{\substack{
    x \sim \rho(\cdot),\;
    y \sim \pi_\theta(\cdot \mid x)
}}
\left[\widehat{r}(x,y)\right]
+\tau^L
\operatorname{KL}_{\rho}
\left(\pi_\theta \,\middle\|\, \pi_{\mathrm{ref}}\right),
\label{eq:leader-objective}
\end{align}
where following Equation~\eqref{eq:smida-current-cost}, $\widehat{r}_\phi$ denotes the plug-in cost currently faced by the Leader:
\begin{equation*}
    \widehat{r}_\phi(x,y) \coloneq \mathbb{E}_{y'\sim\omega_\phi(\cdot \mid x,y)}[u(x,y,y')]-\tau^F\KL_{\rho, \pi_\theta}(\omega_\phi(\cdot \mid x,y) \,\|\, \omegaref(\cdot \mid x,y))
\end{equation*}
Note that the two players optimize in opposite directions, the follower maximizes $J^F$, whereas the leader minimizes $J^L$.

For any baselines $b^F(x,y)$ and $b^L(x)$ that do not depend on the sampled Follower and Leader actions, respectively, the policy gradients can be written as
\begin{align}
\nabla_\phi J^F
&=
\mathbb{E}_{x,y,y'}\!\left[
\nabla_\phi\log\omega_\phi(y'\mid x,y)
\left(
u(x,y,y')
-\tau^F\log\frac{\omega_\phi(y'\mid x,y)}
{\omegaref(y'\mid x,y)}
-b^F(x,y)
\right)
\right],
\label{eq:deep-follower-gradient}\\
\nabla_\theta J^L
&=
\mathbb{E}_{x,y}\!\left[
\nabla_\theta\log\pi_\theta(y\mid x)
\left(
\widehat{r}_\phi(x,y)
+\tau^L\log\frac{\pi_\theta(y\mid x)}
{\piref(y\mid x)}
-b^L(x)
\right)
\right].
\label{eq:deep-leader-gradient}
\end{align}
The constant terms produced by differentiating the KL regularizers can be absorbed into the baselines because the expected score of a policy is zero.

We next give unbiased Monte Carlo estimators of these gradients. Fix a context $x\sim\rho$. Draw $D\geq 2$ independent Leader samples $y^1,\ldots,y^D\sim\pi_\theta(\cdot\mid x)$ and, for every $l\in\{1,\ldots,D\}$, draw $M\geq 2$ independent Follower samples $y'^{l1},\ldots,y'^{lM}\sim\omega_\phi(\cdot\mid x,y^l)$. In practice, the resulting estimators are additionally averaged over a minibatch of $B$ contexts. Define
\begin{equation}\label{eq:algo1}
A^{lj} \coloneq
u(x,y^l,y'^{lj})
-\tau^F\log
\frac{\omega_\phi(y'^{lj}\mid x,y^l)}
{\omegaref(y'^{lj}\mid x,y^l)},
\qquad
\overline A^l:=\frac{1}{M}\sum_{j=1}^M A^{lj}.
\end{equation}
For each Follower sample, use the leave-one-out \citep{DBLP:conf/iclr/KoolHW19a} baseline $b^F_{-j}(x,y^l):=(M-1)^{-1}\sum_{k\neq j}A^{lk}$. The Follower gradient estimator is then
\begin{equation}\label{eq:deep-follower-estimator}
\widehat{\nabla_\phi J^F} \coloneq
\frac{1}{D}\sum_{l=1}^D\frac{1}{M}\sum_{j=1}^M
\nabla_\phi\log\omega_\phi(y'^{lj}\mid x,y^l)
\left(A^{lj}-b^F_{-j}(x,y^l)\right).
\end{equation}
Conditionally on $(x,y^l)$, the baseline $b^F_{-j}(x,y^l)$ is independent of $y'^{lj}$. It therefore reduces variance without changing the expectation of the estimator. The same rollouts also provide the Leader estimator; no additional sampling is required. In particular, $\overline A^l$ is a conditionally unbiased estimate of $\widehat r(x,y^l)$. Define
\begin{equation}\label{eq:algo2}
    L^l \coloneq
\overline A^l
+\tau^L\log\frac{\pi_\theta(y^l\mid x)}
{\pi^{\mathrm{ref}}(y^l\mid x)},
\qquad
b^L_{-l}(x):=\frac{1}{D-1}\sum_{m\neq l}L^m.
\end{equation}
The resulting Leader gradient estimator is
\begin{equation}\label{eq:deep-leader-estimator1}
\widehat{\nabla_\theta J^L} \coloneq
\frac{1}{D}\sum_{l=1}^D
\nabla_\theta\log\pi_\theta(y^l\mid x)
\left(L^l-b^L_{-l}(x)\right).
\end{equation}
Here $b^L_{-l}(x)$ is independent of $y^l$ conditional on $x$, so the Leader estimator is also unbiased. Averaging Equations~\eqref{eq:deep-follower-estimator} and~\eqref{eq:deep-leader-estimator1} over a minibatch of contexts preserves this property.

At iteration $t$, the Follower takes an ascent step and the Leader a descent step:
\begin{equation}
\phi_{t+1}
=\phi_t+\eta_t^F\widehat{\nabla_\phi J^F},
\qquad
\theta_{t+1}
=\theta_t-\eta_t^L\widehat{\nabla_\theta J^L}.
\label{eq:deep-updates}
\end{equation}
In practice we apply these updates through AdamW optimizers \citep{DBLP:conf/iclr/LoshchilovH19} with $\eta^L_t$ and $\eta^F_t$ as their learning rates and no weight decay; each gradient estimate is clipped to unit norm before the optimizer step. Both policies are initialized from their corresponding reference policies, which remain frozen throughout training. Algorithm~\ref{alg:deep-stackelberg} summarizes the complete procedure: for every minibatch, it samples the nested Leader--Follower rollouts, constructs both gradient estimators, and applies the two updates in Equation~\eqref{eq:deep-updates}.

\subsection{Learning Rate Warm-up.}\label{subsec:learn-warmup}

As discussed in Section~\ref{sec:deep-impl-maintext}, directly starting from the target two-timescale learning rates can make training unstable. Let $T_{\mathrm{warm}}$ be the number of warm-up iterations. We linearly warm the Leader's rate from $\eta^L_{\mathrm{init}}$ to its target value $\eta^L$ and, simultaneously, the ratio of the Follower and Leader rates from $1$ to its target value $\eta^F/\eta^L$:
\begin{align*}
\eta_t^L
&=
\eta^L_{\mathrm{init}}
+\min\!\left\{\frac{t}{T_{\mathrm{warm}}},1\right\}
\left(\eta^L-\eta^L_{\mathrm{init}}\right),\\
\eta_t^F
&=
\eta_t^L
\left[
1+\min\!\left\{\frac{t}{T_{\mathrm{warm}}},1\right\}
\left(\frac{\eta^F}{\eta^L}-1\right)
\right].
\end{align*}
Thus, the two rates are equal at initialization, while after warm-up they equal their target values $\eta^L$ and $\eta^F$. We choose the targets so that $\eta^L\tau^L<\eta^F\tau^F$, matching the two-timescale ordering of Algorithm~\ref{alg:smida}. 

\begin{algorithm}[t]
\caption{Deep Stackelberg Mirror Descent--Ascent (\textbf{\textsc{deep-smida}})}
\label{alg:deep-stackelberg}
\begin{algorithmic}[1]
\Require Utility $u$; context distribution $\rho$; reference policies $\piref$ and $\omegaref$; regularization weights $\tau^L,\tau^F>0$; minibatch size $B$; sample sizes $D,M\geq2$; step sizes $\{\eta_t^L,\eta_t^F\}_{t\geq0}$
\State Initialize $\theta_0$ and $\phi_0$ from their corresponding reference policies, which remain frozen throughout training
\For{$t=0,1,2,\ldots$}
    \State Draw a minibatch
    $\mathcal{B}_t=\{x_i\}_{i=1}^B$ with
    $x_i\sim\rho$
    \For{$i=1,\ldots,B$}
        \State Draw $D$ independent Leader samples
        $y_i^1,\ldots,y_i^D
        \sim\pi_{\theta_t}(\cdot\mid x_i)$
        \For{$\ell=1,\ldots,D$}
            \State Draw $M$ independent Follower samples
            $y_i^{\prime\,\ell 1},\ldots,y_i^{\prime\,\ell M}
            \sim\omega_{\phi_t}(\cdot\mid x_i,y_i^\ell)$
            \State Compute $A_i^{\ell j}$, $\overline{A}_i^\ell$, and
            $b^F_{-j}(x_i,y_i^\ell)$ according to
            \eqref{eq:algo1}
        \EndFor
        \State Form the per-context Follower gradient estimator
        $\widehat{\nabla_\phi J^F}(x_i)$ according to
        \eqref{eq:deep-follower-estimator}
        \State Using the same rollouts, compute $L_i^\ell$ and
        $b^L_{-\ell}(x_i)$ according to
        \eqref{eq:algo2}
        \State Form the per-context Leader gradient estimator
        $\widehat{\nabla_\theta J^L}(x_i)$ according to
        \eqref{eq:deep-leader-estimator}
    \EndFor
    \State Average the per-context estimators over $\mathcal{B}_t$ to
    obtain $\widehat{\nabla_\phi J^F}_t$ and
    $\widehat{\nabla_\theta J^L}_t$
    \State Update the Follower by ascent and the Leader by descent as in
    \eqref{eq:deep-updates}:
    \vspace{-0.3\baselineskip}
    \[
    \phi_{t+1}
    =
    \phi_t+\eta_t^F\widehat{\nabla_\phi J^F}_t,
    \qquad
    \theta_{t+1}
    =
    \theta_t-\eta_t^L\widehat{\nabla_\theta J^L}_t.
    \]
    \vspace{-\baselineskip}
\EndFor
\end{algorithmic}
\end{algorithm}

\subsection{Computational Complexity}

We provide a brief analysis of the computational complexity of Algorithm~\ref{alg:deep-stackelberg}. Each iteration of Algorithm 2 samples $B$ contexts \(\{x_i\}_{i \in \{1,2,\dots,B\}}\), $D$ Leader responses \(\{y_i^\ell\}_{l \in \{1,\dots,D\}}\) per context $x_i$, and $M$ Follower responses \(\{y_i'^{\ell d}\}_{d \in \{1,\cdots,M\}}\) per Leader response \(y_i^\ell\). It therefore generates $BD$ Leader responses and $BDM$ Follower responses, for a total of $BDM$ response pairs. From \eqref{eq:deep-follower-estimator}, we observe that estimating \(\widehat{\nabla_\phi J^F}\) requires a forward-backward pass through \(DM\) response trajectories and also \(DM\) forward passes from the frozen reference. We, notice however that the \(D\) rollouts of the leader can be reused. Hence, from \label{eq:deep-leader-estimator}, estimating \(\widehat{\nabla_\theta J^L}\), only requires \(D\) forward-backward passes through the leader and\(D\) forward passes from the frozen reference. The number of \emph{NFE} or Neural Function Evaluations is slightly higher for the follower, due to its higher context length. Further, each of the \(DM\) response pairs also requires evaluations from each of the \(k=5\) preference ensemble. Multiplying this by the mini-batch size, that for each optimizer step, we require \(BDMk+3BDM+3BD\) rollouts.

\section{Experimental Setup and Implementation Details}\label{app:implementation}

This appendix provides details of the experimental setup used in Section~\ref{sec:experiments}. We describe dataset construction and oracle labelling, the architecture and training of the preference ensemble, the distribution of ensemble variance, and the implementation of the Leader--Follower game.

\subsection{Dataset}\label{app:dataset}

We use a subset of UltraFeedback \citep{DBLP:conf/icml/CuiY0YH0NXXL0024}, a preference dataset containing approximately $64$k prompts collected from six public instruction sources, each answered by four language models drawn from a pool of seventeen. We retain the prompts and their responses, discard empty and duplicate responses, and expand each prompt $x$ into all $\binom{N_x}{2}$ unordered pairs of its $N_x$ distinct responses. Each pair defines a preference triplet $(x,y,y')$.

Following \citet{DBLP:conf/icml/GaoSH23}, we adopt a synthetic oracle setup, using \texttt{DeepSeek-V4-Flash} \citep{deepseekai2026deepseekv4} as the gold preference oracle. We replace UltraFeedback's original annotations with oracle labels so that the original response pairs and comparisons subsequently generated by the adversarial game share a common source of supervision. For each comparison, we obtain a binary preference label through zero-shot prompting, randomizing the display order of the two responses to mitigate position bias. The procedure gives a dataset $\mathcal{D}$ of $89{,}625$ labelled preference triplets across $15{,}053$ unique prompts.

\subsection{Preference Models}\label{app:preference-models}

We train an ensemble of $|I|=5$ preference models $\{\mathcal{P}_{\theta_i}\}_{i \in I}$, using a simplified version of the general preference model of \citet{DBLP:conf/icml/ZhangZWXG25}. Each member is initialized from the same \texttt{Qwen2.5-1.5B-Instruct} backbone \citep{qwen2024qwen25} and finetuned independently. Given a preference triplet $(x,y,y')$, we encode each prompt-response pair $(x,y), (x,y')$ separately, project its final-token hidden representation through a bias-free linear head into a $16$-dimensional vector, and apply $L^2$ normalization, obtaining an embedding $z_{\theta_i}(x,y)$. We compare the resulting embeddings through a fixed skew-symmetric bilinear form $J \in \mathbb{R}^{16 \times 16}$, rescale the score by a learned positive scalar $\lambda_{\theta_i}$ shared across inputs, and apply a sigmoid $\sigma$ to obtain the preference probability. The preference probability is
\begin{equation}
\mathcal{P}_{\theta_i}(y \succ y' \mid x)
\coloneq
\sigma\big(
\lambda_{\theta_i}
\langle J z_{\theta_i}(x,y), z_{\theta_i}(x,y')\rangle
\big).
\end{equation}
This construction ensures that $\mathcal{P}_{\theta_i}(y \succ y' \mid x) + \mathcal{P}_{\theta_i}(y' \succ y \mid x)=1$, while allowing intransitive preferences. By the antisymmetry of the preference models and basic properties of the variance, we have $\operatorname{Var}(x,y,y') = \operatorname{Var}(x,y',y)$; thus, the order of the two responses within a preference triplet does not affect the optimization objective.

Each ensemble member is trained by minimizing the negative log-likelihood $-\log\mathcal{P}_{\theta_i}(y_w\succ y_l\mid x)$ of the oracle labels on its own training partition of $\mathcal{D}$, where $y_w$ and $y_l$ denote the oracle-preferred and non-preferred responses. Each partition contains approximately $10{,}000$ preference triplets across $1{,}680$ unique prompts. The partitions are mutually disjoint at the prompt level and all labelled comparisons associated with a training prompt belong to the same partition. Ensemble diversity comes from these distinct training sets, independent random initializations of the linear heads, and different training seeds.

All members share a validation set whose prompts are disjoint from every training partition. A further $37{,}614$ triplets across $6{,}315$ prompts are excluded from both training and validation and serve as the held out set of Sections~\ref{subsec:variance-experiments}. On this set, individual members agree with the gold preference oracle on $74.1$--$75.2\%$ of comparisons, while predictions obtained from the ensemble-mean preference probability agree on $75.7\%$. 

\subsection{Distribution of Ensemble Variance}\label{app:variance-distribution}

We examine the distribution of $\operatorname{Var}(x,y,y')$ on the held out UltraFeedback subset, comprising $37{,}614$ response pairs across $6{,}315$ prompts. For each pair, we compute the population variance of the five members' predicted preference probabilities, using denominator $|I|=5$. All descriptive statistics weight response pairs equally.

Table~\ref{tab:variance-summary} summarizes the distribution. Most comparisons elicit little ensemble disagreement, while a smaller fraction forms a pronounced right tail: the mean is more than twice the median, and the $99$th percentile is approximately fifteen times the median. Only $8.93\%$ of comparisons have variance greater than $0.02$. Consistent with the relationship shown in the main text, mean variance is higher on comparisons where the ensemble disagrees with the synthetic oracle ($0.010010$) than where it agrees ($0.005687$). These statistics describe disagreement within the trained ensemble and its association with oracle error on this held out subset.

\begin{table}[t]
\centering
\caption{Distribution of ensemble variance on the held out UltraFeedback subset.}
\vspace{1pt}
\label{tab:variance-summary}
\begingroup
\setlength{\tabcolsep}{10pt}
\renewcommand{\arraystretch}{1.15}
\begin{tabular}{@{}lr@{\hspace{2.5em}}lr@{}}
\toprule
\multicolumn{2}{c}{Summary statistics} &
\multicolumn{2}{c}{Selected percentiles} \\
\cmidrule(lr){1-2}\cmidrule(lr){3-4}
Statistic & Value & Percentile & Value \\
\midrule
Mean               & $0.006739$          & $25$ & $0.000588$ \\
Median             & $0.002931$          & $75$ & $0.008959$ \\
Variance           & $9.0507\times10^{-5}$ & $90$ & $0.018687$ \\
Standard deviation & $0.009514$          & $95$ & $0.026660$ \\
Maximum            & $0.100228$          & $99$ & $0.044667$ \\
\bottomrule
\end{tabular}
\endgroup
\end{table}

\subsection{Stackelberg Game Implementation}\label{app:game-implementation}

The Leader and Follower are both initialized from \texttt{Qwen2.5-1.5B-Instruct} \citep{qwen2024qwen25} and finetuned during the Stackelberg game. Their frozen initializations serve as the reference policies $\pi^{\mathrm{ref}}$ and $\omega^{\mathrm{ref}}$. 

Table~\ref{tab:game-hyperparameters} summarizes the training configuration. The utility is $4\operatorname{Var}(x,y,y')$, where variance is computed across the five frozen ensemble members with denominator $5$, giving a score in $[0,1]$.

\begin{table}[t]
    \centering
    \caption{Hyperparameters for training the Leader and Follower. Unless specified otherwise, settings apply to both policies.}
    \label{tab:game-hyperparameters}
    \small
    \renewcommand{\arraystretch}{1.08}
    \begin{tabular}{@{}lr@{}}
        \toprule
        \textbf{Parameter} & \textbf{Value} \\
        \midrule
        \multicolumn{2}{@{}l}{\textit{Optimization}} \\
        Optimizer & AdamW \\
        Leader learning rate $\eta^L$ & $8\times 10^{-6}$ \\
        Follower learning rate $\eta^F$ & $2\times 10^{-5}$ \\
        Initial learning rate (both policies) & $10^{-6}$ \\
        Warm-up steps & $50$ \\
        KL coefficients $\tau^L,\tau^F$ & $5\times 10^{-4}$ \\
        Weight decay & $0$ \\
        Maximum gradient norm & $1.0$ \\
        Parameter precision & \texttt{bfloat16} \\
        \addlinespace
        \multicolumn{2}{@{}l}{\textit{Training and rollout sampling}} \\
        Training prompts & $2{,}000$ \\
        Training epochs & $1$ \\
        Optimizer steps & $500$ \\
        Prompts per step $B$ & $4$ \\
        Leader responses per prompt $D$ & $4$ \\
        Follower responses per Leader response $M$ & $4$ \\
        \addlinespace
        \multicolumn{2}{@{}l}{\textit{Generation}} \\
        Sampling temperature & $0.6$ \\
        Top-$p$ & $1.0$ \\
        Maximum generated tokens per response & $1{,}500$ \\
        \bottomrule
    \end{tabular}
\end{table}

During warm-up, we linearly increase the Leader's learning rate from $10^{-6}$ to $\eta^L$ and the Follower-to-Leader learning-rate ratio from $1$ to $\eta^F/\eta^L=2.5$. The Follower's learning rate is their product; both rates remain constant after warm-up. Each optimizer step uses $BDM=64$ preference triplets. Sampling and policy log-probability calculations use the same temperature, with no nucleus or top-$k$ truncation.

We use the following chat templates, with the Follower receiving the Leader's realized response in its system message.

\begin{promptbox}[colback=promptultramarine,colframe=promptultramarineborder]{Leader Template}
[System]: You are Qwen, created by Alibaba Cloud. You are a helpful assistant.

[User]: <USER_PROMPT>
\end{promptbox}

\begin{promptbox}[colback=promptred,colframe=promptredborder]{Follower Template}
[System] You are Qwen, created by Alibaba Cloud. You are a helpful assistant.

Another assistant previously answered the user's request as follows:

<LEADER_RESPONSE>

Write a different, higher-quality answer to the user's request. Respond directly to the user as if answering for the first time. Do not mention, quote, or reference the previous answer.

[User]: <USER_PROMPT>

\end{promptbox}

\section{The Utility as a Design Choice}\label{sec:utility-as-design}

Our framework accommodates any fixed utility $u$ without changing the game structure or update rules, and the tabular guarantees remain valid under the stated assumptions. In our experiments, $u$ is the variance of the preference probabilities predicted by a trained ensemble: the Follower seeks to expose ensemble disagreement, while the Leader seeks to limit it. This is not the only option and exploring alternative utilities is a natural direction for future work.

Preference optimization is another special case. Setting $u(x,y,y')=\mathcal{P}(y'\succ y\mid x)$ recovers an equivalent minimization form of the regularized SLHF \cite{pasztor2026stackelberg} objective. Our analysis thus establishes linear last-iterate convergence of \textbf{\textsc{smida}} for both players in this preference game. In the finite tabular setting with strictly positive regularization and the stated step-size conditions, this answers affirmatively the question raised in \citet{pasztor2026stackelberg} of whether mirror-descent methods can be adapted to obtain last-iterate convergence for Stackelberg preference games.

\subsection{Other Uncertainty Metrics}

Further examples of utility are motivated by uncertainty measures used for query selection in active preference learning, such as ensemble variance and predictive entropy \citep{DBLP:conf/nips/ChristianoLBMLA17,DBLP:conf/icml/LeeSA21}. We consider here two alternatives to variance: decision uncertainty \citep{Lewis1994ASA} and Jensen--Shannon divergence (JSD) \citep{Lin1991DivergenceMB}. Writing $\mathcal{P}_i(x,y,y')$ for the preference probability predicted by ensemble member $i$, let
$\bar{\mathcal{P}}(x,y,y') = \frac{1}{|I|}\sum_{i \in I}\mathcal{P}_i(x,y,y')$
denote the ensemble mean. We define decision uncertainty as
\begin{equation*}
U(x,y,y') \coloneq
\min\left\{\bar{\mathcal{P}}(x,y,y'),\,1-\bar{\mathcal{P}}(x,y,y')\right\}
=
\frac{1}{2}-\left|\bar{\mathcal{P}}(x,y,y')-\frac{1}{2}\right|,
\end{equation*}
and JSD as
\begin{equation*}
\operatorname{JSD}(x,y,y') \coloneq
H\!\left(\bar{\mathcal{P}}(x,y,y')\right)
-\frac{1}{|I|}\sum_{i \in I}H\!\left(\mathcal{P}_i(x,y,y')\right),
\end{equation*}
where $H(p)=-p\log p-(1-p)\log(1-p)$ denotes the entropy of a Bernoulli distribution with parameter $p$. Decision uncertainty measures the proximity of the ensemble mean to indifference and induces the same ranking as its predictive entropy, $H(\bar{\mathcal{P}})$. JSD instead measures disagreement among the predictive distributions of the individual members.

We evaluate both scores on the held-out comparisons of Section~\ref{subsec:variance-experiments}, measuring errors relative to the binary oracle labels. Figure~\ref{fig:uncertainty-jsd-comparison}(a) reports error-detection AUROCs of $0.707$ for decision uncertainty and $0.654$ for JSD. Selecting the top $10\%$ of comparisons ranked by each score gives error rates of $47.5\%$ and $39.7\%$, respectively, compared with $24.3\%$ overall (Figure~\ref{fig:uncertainty-jsd-comparison}(b)). On this evaluation, decision uncertainty therefore outperforms variance in both AUROC ($0.670$) and error concentration in the top $10\%$ ($40.6\%$), while JSD gives slightly lower values on both measures.

Better error detection on a fixed set of comparisons, however, does not necessarily make a score a suitable utility for the adversarial game. Decision uncertainty is maximized when the ensemble mean is $1/2$, including when all members agree on indifference. Since the Follower observes the Leader's realized response, it can attain this maximum by copying that response, setting $y'=y$. The antisymmetry of our preference models ensures that $\mathcal{P}_i(x,y,y)=1/2$ for every member, and hence $U(x,y,y)=1/2$ for every $(x,y)$. Thus, copying is a best response to every Leader response, and the Leader cannot reduce the Follower's optimal utility below $1/2$. Variance and JSD avoid this particular failure mode, since both vanish when all ensemble members predict $1/2$, including on identical-response comparisons. 

\begin{figure}[t]
\centering
\includegraphics[width=\linewidth]{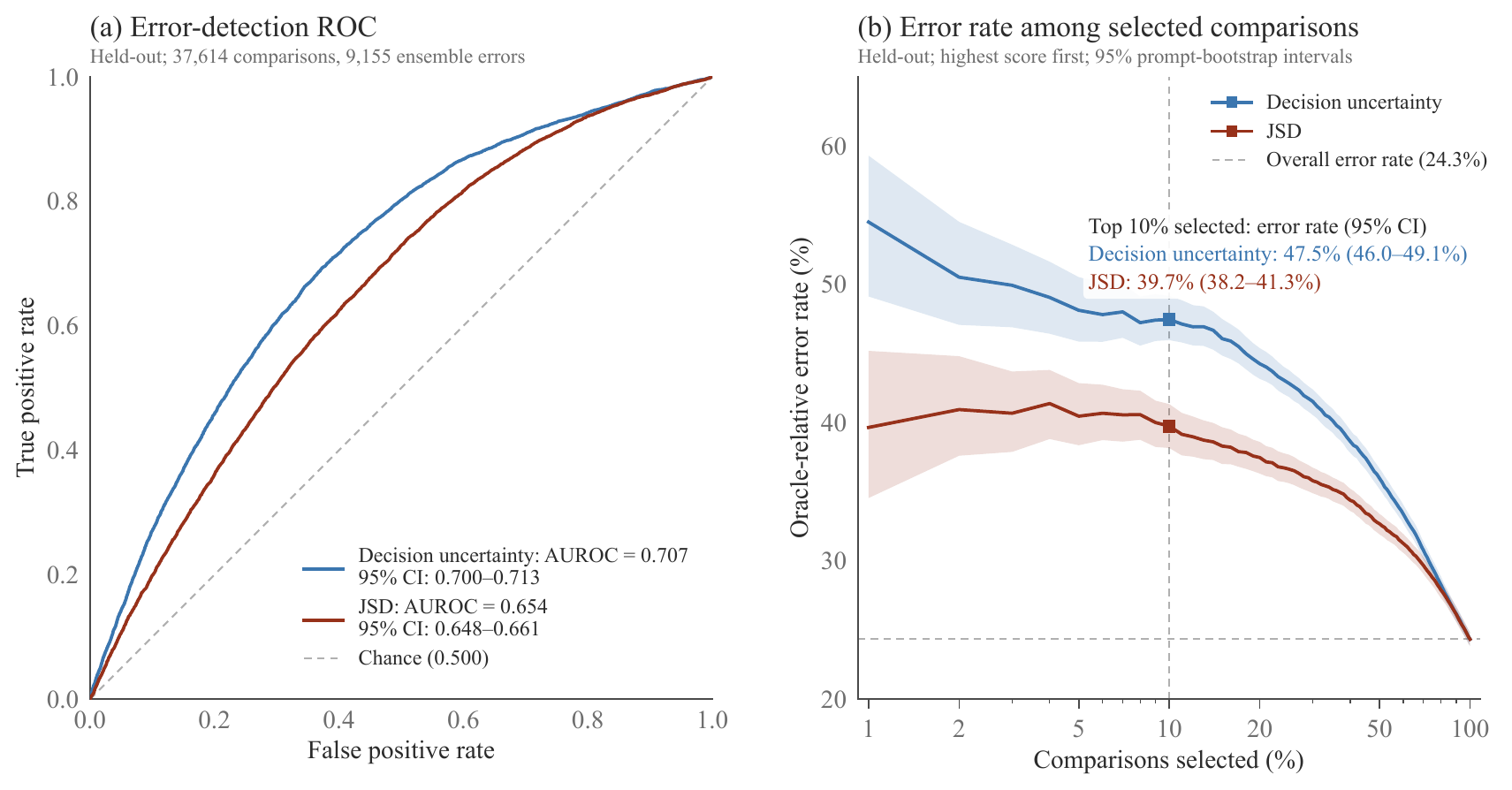}
\caption{Preference-ensemble error detection on $37{,}614$ held-out comparisons, including $9{,}155$ errors relative to the binary oracle labels.
\textbf{(a)} ROC curves for decision uncertainty (blue) and Jensen--Shannon divergence (JSD; red), with AUROCs of $0.707$ and $0.654$, respectively. The dashed diagonal indicates chance performance.
\textbf{(b)} Oracle-relative error rates among comparisons selected in descending order of each score. Selecting the top $10\%$ gives error rates of $47.5\%$ for decision uncertainty ($95\%$ CI: $46.0$--$49.1\%$) and $39.7\%$ for JSD ($95\%$ CI: $38.2$--$41.3\%$), compared with $24.3\%$ overall. Shading shows pointwise $95\%$ prompt-bootstrap intervals.}
\label{fig:uncertainty-jsd-comparison}
\end{figure}

\subsection{Targeting a Specific Preference Model}

Another option is to augment the ensemble disagreement objective in Equation~\eqref{eq:first-adv-game} with a preference term. Given a fixed member $\mathcal{P}_{\theta_j}$ of the ensemble and a coefficient $\lambda \geq 0$, consider
\begin{equation*}
\min_{\pi}\max_{\omega}
\mathbb{E}_{x\sim\rho,\,y\sim\pi(\cdot\mid x),\,y'\sim\omega(\cdot\mid x,y)}
\left[
\operatorname{Var}(x,y,y')
+\lambda\,\mathcal{P}_{\theta_j}(y'\succ y\mid x)
\right].
\end{equation*}
The additional term serves two purposes. First, it acts as a preference-based regularizer as the Follower is encouraged to generate responses that the selected model prefers to the Leader's response. The Leader, in turn, seeks responses that resist both disagreement and defeat under this model. This introduces a preference signal for optimization, with $\lambda$ controlling its strength relative to variance. Second, the term allows us to direct the search toward a specific preference model while retaining uncertainty information from the full ensemble. Experiments show that this directed objective achieves comparable performance to the variance objective in generating $\alpha$-online candidates.

\section{Further Experiments, Parameters Sweep, and Ablation}\label{app:further-exp}

\subsection{Training Dynamics}

Figure~\ref{fig:variance-cdf-and-online-candidates} complements the evaluation in Section~5.3. Panel~(a) shows the empirical CDF of ensemble variance on held-out prompts, computed from generations obtained after training. The trained pair's distribution shifts toward higher variance relative to the reference pair; vertical lines mark the reference-quantile thresholds defining $\alpha$-online candidates.

Panel~(b) tracks candidate shares in training minibatches of $64$ triplets over $500$ steps, using these same fixed thresholds. The moving averages begin near the nominal reference rates, rise markedly after approximately $200$ steps, and remain elevated through the end of training despite substantial minibatch variability. By the final steps, the smoothed shares are approximately $50\%$ at $\alpha=0.1$ and $20\%$ at $\alpha=0.01$. The increased candidate yield is thus sustained during training and is also observed on held-out prompts after training. 

For examples of high variance generated preference triplets, see Appendix~\ref{app:example-responses}.

\begin{figure}[t]
    \centering
    \includegraphics[width=\linewidth]{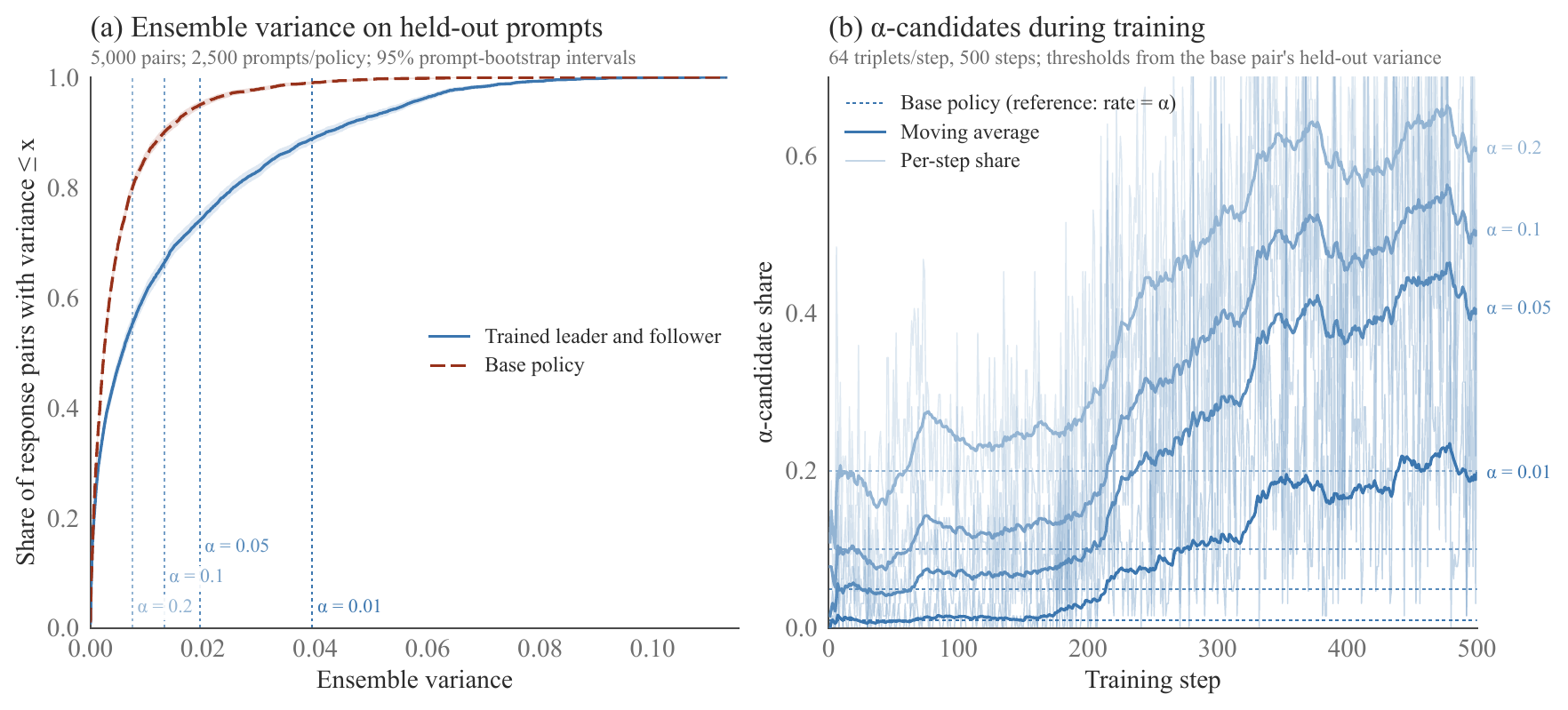}
    \caption{Ensemble disagreement after and during training.
    \textbf{(a)} Empirical CDFs of ensemble variance on $5{,}000$ triplets from $2{,}500$ held-out prompts per policy pair, generated after training. Vertical lines mark reference-quantile thresholds; shading shows $95\%$ prompt-bootstrap intervals.
    \textbf{(b)} Shares of $\alpha$-online candidates in training minibatches of $64$ triplets, using the same fixed thresholds. Thin curves show per-step shares, thick curves their moving averages, and horizontal dotted lines the nominal reference rates $\alpha$.}
    \label{fig:variance-cdf-and-online-candidates}
\end{figure}

\subsection{Hyperparameter Sweep}
\label{app:hyperparameter_sweep}

We performed a sequential hyperparameter sweep over the KL regularization coefficients, the Follower's target learning rate, and the warm-up duration. We first varied the common regularization coefficient
\begin{equation*}
    \tau = \tau^L = \tau^F
\in \{2\times10^{-4},\,5\times10^{-4},\,10^{-3},\,2\times10^{-3},\,4\times10^{-3}\}.
\end{equation*}
After selecting $\tau$, we swept the Follower's target learning rate
$\eta^F \in \{2\times10^{-5},\,3\times10^{-5},\,5\times10^{-5}\}$
and the warm-up duration $T_{\mathrm{warm}} \in \{15,20,50\}$ optimizer steps, using the joint learning-rate and learning-rate-ratio schedule described in Appendix~\ref{subsec:learn-warmup}.

Configuration selection considered both training stability and the fraction of generated triplets satisfying the $\alpha$-online candidate criterion of Definition~\ref{def:alpha-online-adv}.

\subsection{Adversarial Precision of Online Candidates}\label{app:candidate-precision}

We further examine whether the online criterion (Definition~\ref{def:alpha-online-adv}) identifies comparisons on which the ensemble's mean preference is incorrect. Using the trained policies' 1,600 oracle-labelled triplets from 800 holdout prompts described in Section~\ref{subsec:main-text-adver-exp}, Figure~\ref{fig:candidate-precision} reports the fraction of $\alpha$-online candidates that are $\varepsilon$-adversarial. Smaller $\alpha$ imposes a stricter disagreement threshold and generally increases this fraction. At $\alpha=0.1$, $29.7\%$ of candidates reverse the oracle preference ($\varepsilon=0$), compared with $19.4\%$ of all labeled triplets, a $1.53$-fold enrichment over random selection. The same pattern continues for $\varepsilon\in\{0.1,0.2,0.3\}$, indicating that the enrichment extends to reversals with larger oracle margins. These results support ensemble disagreement as an informative criterion for prioritizing comparisons for annotation within the trained policies' generations.

\begin{figure}[t]
    \centering
    \includegraphics[width=0.78\linewidth]
        {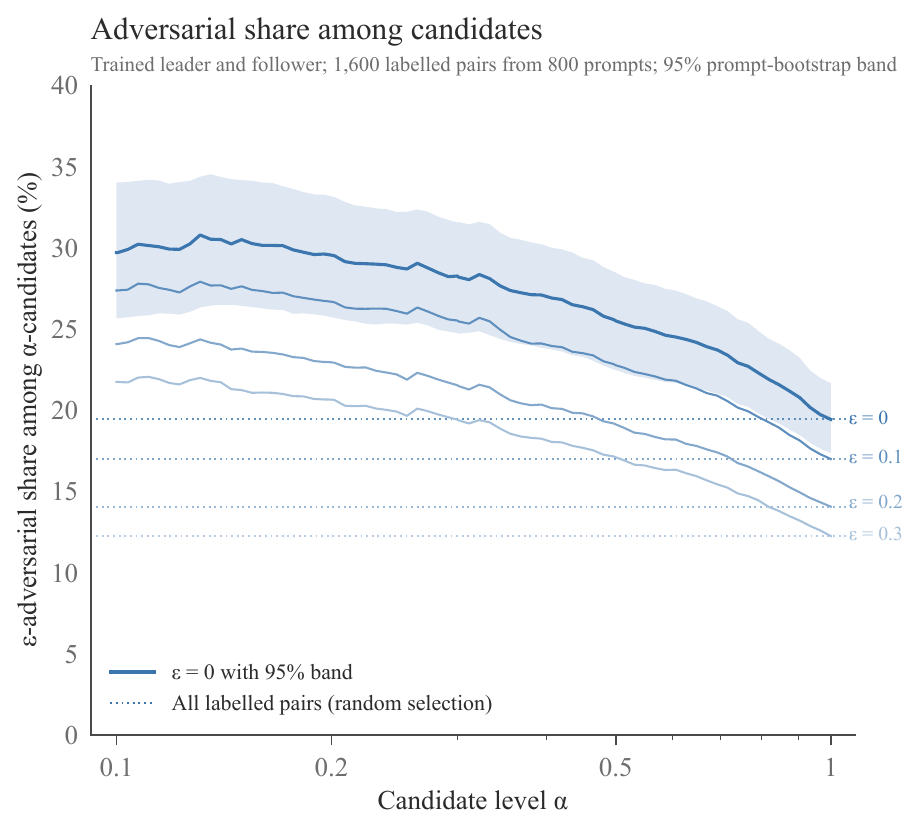}
    \caption{
        Adversarial precision of online candidates.
        Solid curves show the fraction of $\alpha$-online candidates that are $\varepsilon$-adversarial. Candidate thresholds are fixed by the
        reference variance distribution, as in Section~\ref{sec:experiments}
        Lighter curves correspond to larger oracle margins.
        Horizontal dotted lines show the corresponding adversarial fractions among all labelled triplets, giving the expected
        precision under random selection.
        Shading shows a 95\% prompt-bootstrap interval for
        $\varepsilon=0$.
    }
    \label{fig:candidate-precision}
\end{figure}

\subsection{Preference Evaluation of the Trained Leader}
\label{app:roundrobin}

We evaluate whether our Leader produces responses that remain competitive under preference evaluation, despite being only trained to minimize ensemble disagreement rather than maximize preference. We compare it with NLHF~\cite{DBLP:conf/icml/MunosVCARGTGMFM24} and the SLHF Leader~\cite{pasztor2026stackelberg} in a round-robin evaluation. All three policies are initialized from \texttt{Qwen2.5-1.5B-Instruct} and trained on the same $2{,}000$ prompts. NLHF and SLHF optimize preferences under the frozen model $\mathcal{P}_0$, whereas our Leader optimizes the ensemble-variance objective. To control for differences in the choice of trainable parameters, we use full-model fine-tuning for all three methods, with LoRA disabled. Table~\ref{tab:baseline-hyperparameters} reports the training and generation settings for the two baselines. We evaluate all three policy pairings on a separate set of $500$ held-out prompts, using \texttt{DeepSeek-V4-Flash} as the synthetic preference oracle. For each response pair, we query the oracle $k=11$ times and estimate the preference probability by the fraction of judgments favoring one response over the other. Table~\ref{tab:leader-pairwise} reports these win rates, averaged across evaluation prompts.

Both Leaders are preferred to NLHF more often than not: the SLHF Leader achieves a preference rate of $54.92\%$, and our Leader achieves $54.89\%$. Their direct comparison is nearly even, with our Leader preferred in $50.70\%$ of oracle judgments. 

This competitiveness is an unintended benefit of training our Leader to resist ensemble disagreement. We conjecture that minimizing disagreement against an adaptive Follower, together with regularization toward the initial instruction-tuned policy, may favor responses that retain useful instruction-following behavior. 

\begin{table}[t]
    \centering
    \caption{Training and generation settings for the NLHF
    (Nash-MD-PG) and SLHF (StackelbergGDA) baselines.
    Both methods fully fine-tune \texttt{Qwen2.5-1.5B-Instruct}
    on the same $2{,}000$ training prompts using the frozen
    preference model $\mathcal{P}_0$. Each uses a separate
    frozen copy of the initial policy as its KL reference.}
    \label{tab:baseline-hyperparameters}
    \small
    \setlength{\tabcolsep}{6pt}
    \renewcommand{\arraystretch}{1.08}
    \begin{tabular}{@{}lrr@{}}
        \toprule
        \textbf{Parameter}
            & \textsc{NLHF}
            & \textsc{SLHF} \\
        \midrule
        \multicolumn{3}{@{}l}{\textit{Training and rollout sampling}} \\
        Training prompts
            & $2{,}000$ & $2{,}000$ \\
        Training epochs
            & $1$ & $1$ \\
        Optimizer steps
            & $63$ & $63$ \\
        Prompts per optimizer step
            & $32$ & $32$ \\
        Prompts in final step
            & $16$ & $16$ \\
        Responses per prompt
            & $2$ & $2$ \\
        \addlinespace
        \multicolumn{3}{@{}l}{\textit{Generation}} \\
        Sampling temperature
            & $0.6$ & $0.6$ \\
        Top-$p$
            & $1.0$ & $1.0$ \\
        Maximum generated tokens per response
            & $512$ & $512$ \\
        \bottomrule
    \end{tabular}
\end{table}

\begin{table}[t]
    \centering
    \caption{Pairwise oracle preference rates on $500$ held-out
    prompts. Each entry estimates the probability that the
    row policy's response is preferred to the
    column policy's response, using $11$ oracle queries
    per response pair and averaging across prompts.
    Values above $0.5$ favor the row policy.}
    \label{tab:leader-pairwise}
    \small
    \setlength{\tabcolsep}{8pt}
    \renewcommand{\arraystretch}{1.25}
    \begin{tabular}{@{}lccc@{}}
        \toprule
        & \multicolumn{3}{c}{\textbf{Column policy}} \\
        \cmidrule(l){2-4}
        \textbf{Row policy}
            & \textsc{NLHF}
            & \textsc{SLHF Leader}
            & \textsc{Our Leader} \\
        \midrule
        \textsc{NLHF}
            & --- & $0.4508$ & $0.4511$ \\
        \textsc{SLHF Leader}
            & $\mathbf{0.5492}$ & --- & $0.4930$ \\
        \textsc{Our Leader}
            & $\mathbf{0.5489}$ & $\mathbf{0.5070}$ & --- \\
        \bottomrule
    \end{tabular}
\end{table}

\subsection{Instruction Following with a Trained Leader}
\label{app:ifeval}

We examine the role of the Leader by comparing the Follower trained in the full Stackelberg game with a follower-only baseline, in which the Leader remains fixed. Both Followers seek to maximize ensemble disagreement, but only the former trains against a Leader that learns to resist its attempts. This ablation evaluates whether adapting both sides of the comparison is associated with better instruction following after adversarial training.

We evaluate both Followers on IFEval~\cite{zhou2023instruction}, using the same $541$ prompts containing $834$ verifiable instructions. IFEval uses programmatic checks to assess compliance with explicit requirements, such as including specified keywords, respecting length constraints, or ending with a prescribed phrase. Instruction-level accuracy measures the fraction of individual requirements satisfied, whereas prompt-level accuracy requires all verifiable instructions in a prompt to be satisfied. We report both strict and loose accuracy. Strict evaluation checks the original response; loose evaluation additionally accepts specified transformations, such as removing Markdown asterisks or the first or last line, to reduce false negatives caused by formatting or introductory and concluding text.

Table~\ref{tab:ifeval-lr2-follower-only} shows that the Stackelberg Follower achieves higher accuracy on all four aggregate metrics. Loose prompt-level accuracy is $35.12\%$, compared with $28.28\%$ for the follower-only baseline, and loose instruction-level accuracy is $46.04\%$, compared with $38.49\%$. The advantage persists under strict evaluation and also extends across eight of the nine instruction categories. 

These results support our motivation for making the search adversarial on both sides of the comparison. With a fixed Leader, the Follower can specialize in inducing disagreement against an unchanging response distribution, potentially exploiting weaknesses specific to that distribution. An adaptive Leader instead learns responses that resist these attempts, requiring the Follower to search against an evolving opponent. The higher instruction-following accuracy is consistent with this interaction discouraging strategies that induce disagreement at the expense of compliance with the prompt.

\begin{table}[t]
    \centering
    \caption{Leader ablation on IFEval. Accuracy (\%) is reported
    for the Follower trained with an adaptive Leader
    (Stackelberg Follower) and with a fixed Leader
    (follower-only), evaluated on the same $541$ prompts and
    $834$ verifiable instructions. The higher value in each
    row is bolded. Category rows report loose instruction-level
    accuracy.}
    \label{tab:ifeval-lr2-follower-only}
    \small
    \setlength{\tabcolsep}{7pt}
    \renewcommand{\arraystretch}{1.08}
    \begin{tabular}{@{}lrr@{}}
        \toprule
        \textsc{Metric}
            & \textsc{Stackelberg Follower}
            & \textsc{Follower-only} \\
        \midrule
        Loose: prompt
            & \textbf{35.12} & 28.28 \\
        Loose: instruction
            & \textbf{46.04} & 38.49 \\
        Strict: prompt
            & \textbf{30.31} & 23.66 \\
        Strict: instruction
            & \textbf{41.25} & 33.93 \\
        \midrule
        \multicolumn{3}{@{}l}{
            \emph{Loose accuracy by instruction category}
        } \\
        Case
            & 22.47 & \textbf{33.71} \\
        Combination
            & \textbf{10.77} & 9.23 \\
        Detectable content
            & \textbf{50.94} & 41.51 \\
        Detectable format
            & \textbf{63.69} & 49.04 \\
        Keywords
            & \textbf{55.83} & 53.99 \\
        Language
            & \textbf{74.19} & 64.52 \\
        Length
            & \textbf{44.76} & 33.57 \\
        Punctuation
            & \textbf{19.70} & 16.67 \\
        Start/end
            & \textbf{58.21} & 28.36 \\
        \bottomrule
    \end{tabular}
\end{table}

\section{Example Prompts and Responses}
\label{app:example-responses}

We present selected high variance Leader and Follower comparisons to illustrate the response behaviors encountered during adversarial search. They illustrate unsupported elaboration, discrepancies between reasoning and final-answer correctness, and failures to preserve the requirements of a prompt. 

\subsection*{Example 1 (Unsupported elaboration)}

The Leader directly supplies the name given in the passage. The Follower reproduces that answer but adds a speculative expansion of the initial ``S.'' that the passage does not support.  This example illustrates how elaboration can introduce unsupported specificity into an otherwise sufficient answer. 

\begin{promptbox}[colback=promptgreen,colframe=promptgreenborder]{Prompt}
Quick Question: The chain's origin can be traced to the Dwarf Grill (now the Dwarf House), a restaurant opened by S. Truett Cathy, the chain's former chairman and CEO, in 1946. The restaurant is located in Hapeville, Georgia, a suburb of Atlanta, and is near the location of the now-demolished Ford Motor Company Atlanta Assembly Plant, for many years a source of many of the restaurant's patrons.

Question: who was the founder of chick fil a?
My answer:
\end{promptbox}

\begin{promptbox}[colback=promptred,colframe=promptredborder]{Leader Response}
The founder of Chick-fil-A is S. Truett Cathy.
\end{promptbox}

\begin{promptbox}[colback=promptultramarine,colframe=promptultramarineborder]{Follower Response}
To answer the question, let's look at the context provided:

"The chain's origin can be traced to the Dwarf Grill (now the Dwarf House), a restaurant opened by S. Truett Cathy, the chain's former chairman and CEO, in 1946."

From this sentence, we can see that S. Truett Cathy founded Chick-fil-A. The full name given is S. Truett Cathy, which likely stands for Stephen Truett Cathy. The word "Cathy" is often used as a family name in American English, so we can assume this refers to the founder.
\end{promptbox}

\subsection*{Example 2 (Correct final answer with flawed reasoning)}

Under the natural reading that each pack contains $30$ utensils split equally among knives, forks, and spoons, each pack contains $10$ spoons and the required number of packs is $5$. The Leader reaches this final answer through incorrect and internally inconsistent reasoning. The Follower is much more concise, but divides by $3$ and returns the incorrect answer $17$. This comparison exposes a tension between final-answer correctness and explanation quality.

\begin{promptbox}[colback=promptgreen,colframe=promptgreenborder]{Prompt}
John decides to buy utensils. They come in 30 packs with an equal number of knives, forks, and spoons. How many packs does he need to buy if he wants 50 spoons?
\end{promptbox}

\begin{promptbox}[colback=promptred,colframe=promptredborder]{Leader Response}
To find out how many packs John needs to buy, we start by figuring out how many teaspoons (knives and forks) are needed in total:

Number of spoons per pack = Number of packs x Number of spoons per pack

Since each pack contains an equal number of knives, forks, and spoons:
Teaspoons per pack = 3 spoons/pack

Now calculate the total teaspoons needed:
Total teaspoons needed = Number of spoons wanted x Number of spoons per pack
Total teaspoons needed = 50 spoons x 3 spoons/pack
Total teaspoons needed = 150 spoons

Each pack provides 3 spoons, but since there's no specific information about the distribution between knives and forks, let's assume he needs exactly 30 packs to get 90 spoons for simplicity.

Therefore, the number of packs John needs to buy is 150 spoons / 30 spoons/pack
Number of packs needed = 150 spoons / 30 spoons/pack
Number of packs needed = 5 packs

So, John needs to buy 5 packs of utensils to have 50 spoons.
\end{promptbox}

\begin{promptbox}[colback=promptultramarine,colframe=promptultramarineborder]{Follower Response}
He would need 50/3 = 16.67 packs, which rounds up to 17 packs. The answer is 17.
\end{promptbox}

\subsection*{Example 3 (Errors motivated by a rewrite)}

Both responses classify the original declarative sentence as exclamatory and replace ``over 100 million'' with ``100 million'', losing the stated lower bound. The Follower additionally inserts an exclamation mark and claims that its rewrite ``maintains the exclamation'', although the original sentence contains none. The rewrite makes the mistaken classification appear consistent with the revised text. 

\begin{promptbox}[colback=promptgreen,colframe=promptgreenborder]{Prompt}
Identify the type of the following sentence, rewrite it in a more concise form, and incorporate the budget of the movie as an additional constraint. Determine the sentence type and simplify: The Tom Cruise movie was a huge hit, with a budget of over 100 million.
\end{promptbox}

\begin{promptbox}[colback=promptred,colframe=promptredborder]{Leader Response}
Sentence Type: Exclamatory Sentence

Simplified Form: "The Tom Cruise movie had an incredible success with a staggering budget of 100 million."

Incorporating Budget Constraint:
"The Tom Cruise movie was a massive hit despite its extremely high budget of 100 million."
\end{promptbox}

\begin{promptbox}[colback=promptultramarine,colframe=promptultramarineborder]{Follower Response}
Type of Sentence: Exclamatory

Concise and Incorporating Budget Constraint:
"Tom Cruise's movie is a huge smash, with an astronomical budget of 100 million!"

This version maintains the exclamation, simplifies the language, and incorporates the budget detail.
\end{promptbox}

\subsection*{Example 4 (Loss of implicit alignment)}

The prompt refers to a sentence that is not supplied. The Leader acknowledges this missing input and offers a qualified suggestion. The Follower omits the qualification and substitutes scheduled punctuality for arriving just in time to prevent a negative consequence, which need not describe the same situation.

\begin{promptbox}[colback=promptgreen,colframe=promptgreenborder]{Prompt}
Given this sentence about punctuality, suggest a more appropriate and precise verb to convey the notion of arriving just in time to prevent a negative consequence.
\end{promptbox}

\begin{promptbox}[colback=promptred,colframe=promptredborder]{Leader Response}
The original sentence is not provided, but I can offer a suggestion based on common scenarios where punctuality might be crucial:

"If you arrive **on** time, you will avoid any negative consequences."

This phrase accurately conveys that being there exactly when needed prevents issues from arising.
\end{promptbox}

\begin{promptbox}[colback=promptultramarine,colframe=promptultramarineborder]{Follower Response}
"Arriving precisely at the scheduled moment"
\end{promptbox}

\section{Why General Preference Models Can Be More Prone to Reward Hacking}\label{sec:preference-model-hacking}

General preference models can represent intransitive preferences that scalar reward models cannot capture. This additional expressivity can, however, increase the number of annotations needed to estimate preferences accurately. We illustrate this tradeoff with a toy example in which the ground-truth preferences admit a scalar reward representation. In this setting, a reward model can pool information across comparisons, whereas an unrestricted preference estimator learns each pair separately. Fix a prompt $x$ and a response set $\mathcal{Y}={y_1,\ldots,y_K}$ with $K \ge 2$. Write
\begin{equation*}
    \mathcal{P}^*_{ij}:=\mathcal{P}^*(y_i\succ y_j\mid x),
\end{equation*}
which we assume to be antisymmetric. . For each of the $M=\binom{K}{2}$ unordered pairs, we observe $n$ annotations
\begin{equation*}
    Y_{ij}^{(\ell)}\sim\mathrm{Bernoulli}(\mathcal{P}^*_{ij}),
\qquad i<j,\quad \ell=1,\dots,n
\end{equation*}
independently across pairs and annotations, for a total budget of $nM$ observations. Define
\begin{equation*}
    \bar Y{ij}:=\frac1n\sum_{\ell=1}^nY_{ij}^{(\ell)}
\quad\text{for }i<j,
\qquad
\bar Y_{ji}:=1-\bar Y_{ij},
\qquad
\bar Y_{ii}:=\frac12.
\end{equation*}
To compare the two estimators under the same ground truth, suppose
\begin{equation*}
  \mcP^*_{ij}=\frac12+r_i-r_j,
\qquad
\sum_{i=1}^{K} r_i=0,
\qquad
|r_i|\leq\frac14
\quad\text{for all }i=1,\dots,K. 
\end{equation*}
where $r_i$ is the reward of response $y_i$. We measure estimation error in the entrywise supremum norm,
\begin{equation*}
\|\hat{\mathcal{P}}-\mathcal{P}^*\|_\infty
:=\max_{i,j=1,\dots,K}
\left|\hat{\mathcal{P}}_{ij}-\mathcal{P}^*_{ij}\right|.
\end{equation*}
Fix a confidence parameter $\delta\in(0,1)$.

\paragraph{General preference estimator.}
Consider the unbiased estimator $\hat{\mcP}^{\mathrm{pref}}_{ij}:=\bar Y_{ij}$. For each $i<j$, Hoeffding's inequality gives
\[
\PP\!\left(
    |\bar Y_{ij}-\mcP^*_{ij}|\geq\alpha
\right)
\leq 2\exp(-2n\alpha^2).
\]
Taking a union bound over the $M$ unordered pairs gives
\begin{equation}
\label{eq:preferencebound}
\|\hat{\mcP}^{\mathrm{pref}}-\mcP^*\|_\infty
\leq
\sqrt{\frac{\log(2M/\delta)}{2n}}
\end{equation}
with probability at least $1-\delta$. This bound does not require
the scalar reward representation assumed above.

\paragraph{Reward estimator.}
Now consider
\[
\hat r_i
:=\frac{1}{K}\sum_{j\neq i}
\left(\bar Y_{ij}-\frac12\right).
\]
Under the assumed reward representation, this estimator is unbiased:
\[
\EE[\hat r_i]
=\frac{1}{K}\sum_{j\neq i}(r_i-r_j)
=\frac{1}{K}\left(Kr_i-\sum_{j=1}^K r_j\right)
=r_i.
\]
We estimate preferences by
\[
\hat{\mcP}^{\mathrm{rew}}_{ij}
:=\left[\frac12+\hat r_i-\hat r_j\right]_{[0,1]},
\]
where $[z]_{[0,1]}:=\min\{1,\max\{0,z\}\}$ clips the estimate
to the probability interval. Consequently,
\begin{equation*}
    |\hat{\mcP}^{\mathrm{rew}}_{ij}-\mcP^*_{ij}|
\leq |(\hat r_i-r_i)-(\hat r_j-r_j)| 
\leq |\hat r_i-r_i|+|\hat r_j-r_j|.
\end{equation*}
It therefore suffices to control the reward estimation errors
uniformly. For each fixed $i$, we can write
\[
\hat r_i-r_i
=\frac{1}{nK}
\sum_{j\neq i}\sum_{\ell=1}^n
\left(Y_{ij}^{(\ell)}-\mcP^*_{ij}\right),
\]
where $Y_{ij}^{(\ell)}:=1-Y_{ji}^{(\ell)}$ for $j<i$. Hoeffding's inequality thus gives
\[
\PP\!\left(
    |\hat r_i-r_i|\geq\frac{\alpha}{2}
\right)
\leq
2\exp\!\left(-\frac{nK^2\alpha^2}{2(K-1)}\right).
\]
Taking a union bound over the $K$ responses, we obtain
\[
\PP\!\left(
    \max_{i\in[K]}|\hat r_i-r_i|\geq\frac{\alpha}{2}
\right)
\leq
2K\exp\!\left(-\frac{nK^2\alpha^2}{2(K-1)}\right).
\]
Setting the right-hand side equal to $\delta$ and applying
the bound on pairwise preference errors gives
\begin{equation}\label{eq:rewardbound}
\|\hat{\mcP}^{\mathrm{rew}}-\mcP^*\|_\infty
\leq
\sqrt{\frac{2(K-1)\log(2K/\delta)}{nK^2}}
\end{equation}
with probability at least $1-\delta$. 

Comparing Equations~\eqref{eq:preferencebound} and~\eqref{eq:rewardbound}, we find that, for fixed $\delta$, the uniform-error upper bound for the general preference estimator is larger by a factor of order $\sqrt{K}$. The reward estimator gains accuracy by pooling information across the $K-1$ comparisons involving each response, whereas the general preference estimator uses only the $n$ annotations for each pair. Thus, under the same annotation budget of $nM$, this example illustrates how general preference estimation can admit a larger upper bound on the errors that optimization may exploit.

\end{document}